\pdfoutput=1
\documentclass{article} %
\usepackage{iclr2027_conference,times}

\usepackage[utf8]{inputenc}
\usepackage[T1]{fontenc}
\usepackage[colorlinks=true,linkcolor=blue!45!black,citecolor=blue!45!black,urlcolor=blue!45!black]{hyperref}
\usepackage{url}
\usepackage{booktabs}
\usepackage{longtable}
\usepackage{amsfonts}
\usepackage{amsmath, amssymb}
\usepackage{amsthm}
\usepackage{nicefrac}
\usepackage{microtype}
\usepackage{xcolor}
\usepackage{graphicx}
\usepackage{wrapfig}
\usepackage{caption}
\usepackage{enumitem}
\usepackage{algorithm}
\usepackage[noend]{algpseudocode}
\usepackage{tcolorbox}
\tcbuselibrary{breakable}

\theoremstyle{plain}
\newtheorem{theorem}{Theorem}
\newtheorem{lemma}[theorem]{Lemma}

\newtheorem{corollary}[theorem]{Corollary}
\newtheorem{assumption}{Assumption}
\theoremstyle{definition}

\newtheorem{remark}{Remark}

\iclrfinalcopy

\title{Constraint Tree Exploration\\ for Learning from Language Feedback}

\author{
Shaoang Li$^{1}$ \qquad
Daniel R. Jiang$^{2}$ \qquad
Jian Li$^{1}$ \\
$^{1}$Stony Brook University
\qquad
$^{2}$Meta Platforms \\
\texttt{\{shaoang.li,jian.li.3\}@stonybrook.edu}
\qquad
\texttt{danielrjiang@gmail.com}
}

\begin{document}

\maketitle
\lhead{Preprint}

\begin{abstract}

Natural-language feedback in interactive learning often explains \emph{why} an action failed by pointing to violated requirements. Misinterpreting this feedback can lead an agent to rule out valid solutions. We study this setting by modeling user intent as latent constraints over an action space and formulating learning from language feedback as pure exploration over feasible regions. We introduce \textsc{TRACE}, an algorithm that organizes candidate constraints in a tree and tests each proposed refinement by generating actions that satisfy it. \textsc{TRACE} commits to the refinement only if the resulting feedback does not contradict it over repeated tests. We distinguish two ways of using the same feedback: (i) \emph{falsification}, which detects contradictions to the constraint set currently being tested, and (ii) \emph{identification}, which may additionally name a violated constraint. We prove high-probability coverage bounds with dependence on the candidate class size $H$ for \textsc{TRACE-Falsification}. With reliable identification, \textsc{TRACE-Identification} can replace this dependence by $K/p_{\mathrm{ext}}$, where $K$ is the number of latent constraints and $p_{\mathrm{ext}}$ lower-bounds the probability of extracting a missing true constraint from informative feedback. We evaluate \textsc{TRACE} across six language-feedback tasks. On RecMovie, \textsc{TRACE-Identification} achieves 73\% and 86\% final-output success under caps of 20 and 60 evaluated outputs, compared with at most 42\% and 48\% for the evaluated prompting baselines given the same feedback and output caps. Controlled identity-corruption experiments further show greater robustness than direct accumulation when the falsification detector remains reliable.
\end{abstract}

\section{Introduction}
\label{sec:introduction}

Interactive agents built on large language models (LLMs) can receive feedback expressed in natural language rather than as a numerical reward~\citep{cheng2023llf,shinn2023reflexion}. 
A user may reject a movie recommendation by saying ``I wanted something from the 90s'', revise a travel plan by saying ``this exceeds my budget'', or correct a generated poem by saying ``the second line has too many syllables''. 
Such feedback can explain \emph{why} an action failed by identifying an unmet requirement. Because this requirement also applies to other candidate actions, the feedback can help narrow the search for acceptable alternatives. The challenge is to infer useful constraints from incomplete or ambiguous feedback and determine when the available evidence justifies enforcing them.

Existing approaches have explored how natural-language feedback can be incorporated into LLM agents. %
Feedback-driven methods, including \textsc{Reflexion}~\citep{shinn2023reflexion}, \textsc{Self-Refine}~\citep{madaan2023selfrefine}, \textsc{TextGrad}~\citep{yuksekgonul2025optimizing}, and language-agent tree search (\textsc{LATS})~\citep{zhou2024language}, use feedback to guide generation and search through revised prompts, reflective memories, textual gradients, or value estimates. %
Recent theoretical work provides complementary abstractions, including regret bounds under hindsight instructions~\citep{misra2024provable} and consistency-based elimination over global hypotheses~\citep{xu2025helix}. 
We focus on pure exploration over feasible regions: the learner discovers, tests, and accumulates latent constraints to progressively restrict the action space. This formulation makes explicit both the information needed to propose a constraint and the evidence needed to commit to it.

\begin{figure}[t]
    \centering
    \includegraphics[width=0.95\linewidth]{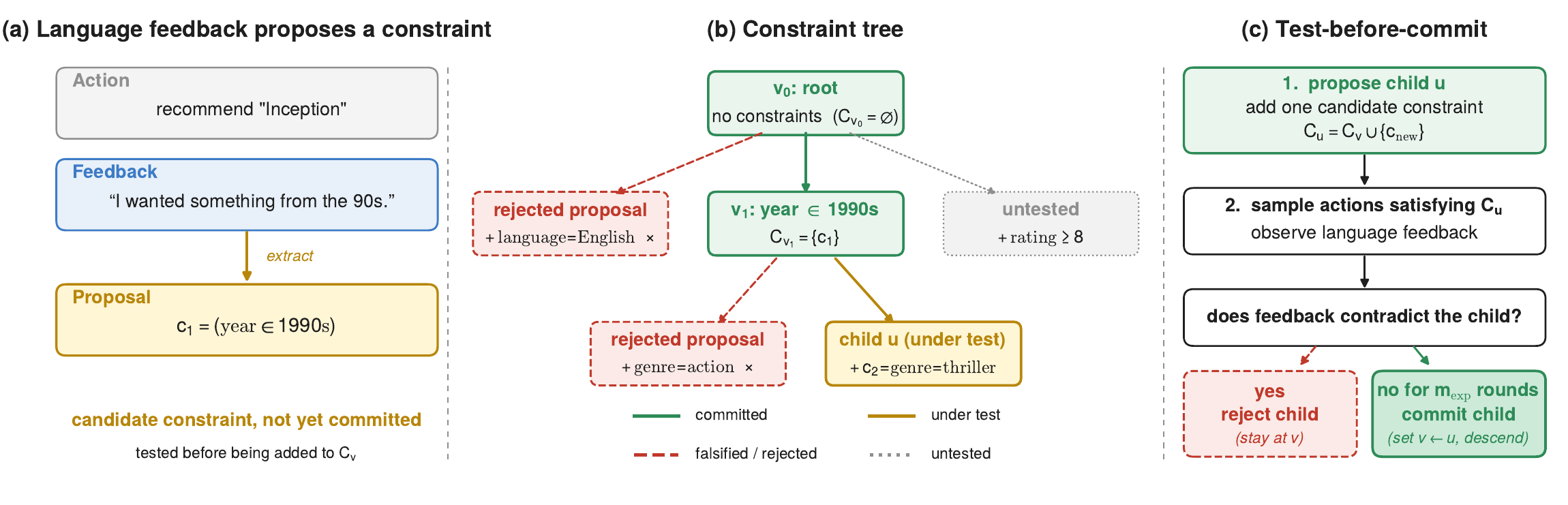}
    \vspace{-0.1in}
    \caption{Constraint-tree exploration in \textsc{TRACE}. At a committed node $v$ with constraint set $C_v$, a candidate constraint $c_{\mathrm{new}}$ proposes a child $u$ with $C_u=C_v\cup\{c_{\mathrm{new}}\}$. The child is tested by sampling actions satisfying $C_u$: feedback contradicting $C_u$ rejects the child, while survival for a fixed number $m_{\mathrm{exp}}$ of tests leads to commitment. Extracted identities guide proposals; falsification tests determine commitment.}
    \label{fig:constraint-tree}
    \vspace{-0.15in}
\end{figure}

We consider episodes with fixed latent requirements represented in a finite candidate class. We distinguish two oracle-access regimes induced by the same natural-language feedback.
In the \emph{falsification setting}, the learner has access to a violation oracle that tests whether feedback contradicts the constraint set currently being tested. In the \emph{identification setting}, the learner additionally has access to an extractor that may name a violated atomic constraint. 
These two regimes specify what information the learner can obtain from the same raw feedback.
Identification access can accelerate exploration by revealing which constraint to test next. But an extracted identity is only a proposal, not a fact: %
committing an incorrectly extracted constraint can permanently shrink the feasible region in the wrong direction. Without identities, however, discovering useful constraints may require blind search over a large candidate class.
The challenge is therefore to benefit from named constraints without trusting them too early.  This leads to the central question of the paper:
\vspace{-0.05in}
\begin{quote}
\emph{How can an agent exploit named constraints from language feedback without committing to them prematurely, and when does identification access reduce the sample complexity of exploration?}
\end{quote}
\vspace{-0.05in}
We answer this question with the \emph{Test-and-Refine Algorithm for Constraint Exploration} (\textsc{TRACE}). The search state in \textsc{TRACE} is the committed constraint set, which indexes a surviving feasible region. Candidate refinements form children in a constraint tree, as illustrated in Figure~\ref{fig:constraint-tree}. Rather than committing to a proposed constraint immediately, \textsc{TRACE} performs a multi-round test: it samples actions from the child region and rejects the child if feedback contradicts the tested constraint set. If the child survives repeated tests, the algorithm commits to it and descends the tree. Thus, extracted identities are treated as \emph{proposals}, not facts. The tree %
separates proposal from commitment and organizes partially accumulated constraints. The same test-before-commit primitive instantiates the two oracle-access regimes formalized in Section~\ref{sec:tree}: \textsc{TRACE-Falsification} (\textsc{TRACE-F}; Assumption~\ref{ass:falsification-access} only), which uses the violation oracle for falsification, and \textsc{TRACE-Identification} (\textsc{TRACE-I}; Assumptions~\ref{ass:falsification-access} and~\ref{ass:identification-access}), which additionally prioritizes constraints extracted from feedback.

Our analysis separates the cost of blind falsification from the value of usable identities. In the falsification setting, \textsc{TRACE-F} discovers true constraints by proposing candidates from a finite class of size $H$, leading to sample complexity %
$\widetilde{\mathcal{O}}(H)$ up to the per-test cost of falsification (Theorem~\ref{thm:main-noisy}). %
In the identification setting, %
under sound extraction and an explicit discovery-progress condition, 
\textsc{TRACE-I} replaces this dependence by the effective discovery cost $\widetilde{\mathcal{O}}(K/p_{\mathrm{ext}})$, where $K$ is the number of latent constraints and $p_{\mathrm{ext}}$ lower-bounds the probability of extracting a missing true constraint from informative feedback (Theorem~\ref{thm:identity}). %
These bounds quantify how usable identities can reduce the number of tested refinements relative to uniform blind search.

Empirically, we evaluate \textsc{TRACE} across five controlled language-feedback environments spanning six task instances: recommendation, two entity-deduction benchmarks, \textsc{Mastermind}, and constrained poem generation. %
On RecMovie, \textsc{TRACE-I} achieves 73\% and 86\% final-output success under common caps of 20 and 60 evaluated outputs, compared with at most 42\% and 48\% for the evaluated prompting baselines. Controlled \textsc{Mastermind} sweeps show that identification-guided proposals require fewer tested refinements as the candidate class grows. False-identity experiments further show that \textsc{TRACE-I} remains more robust than direct accumulation when extracted identities are corrupted and the falsification detector remains reliable.

In summary, we make three contributions. \textit{First,} we formulate learning from language feedback as pure exploration over latent constraints, where feedback provides local evidence about feasible regions rather than scalar rewards. \textit{Second,} we introduce \textsc{TRACE}, a parser-agnostic test-before-commit framework that treats extracted identities as proposals rather than facts. %
\textit{Third,} we derive high-probability coverage bounds that quantify the benefit of identification access, and we evaluate proposal efficiency, interaction budgets, and robustness to identity corruption.

\section{Problem Formulation and Constraint Tree}
\label{sec:formulation}
We formulate learning from language feedback as a pure-exploration problem over latent constraints. The learner never observes numerical utility values; instead, it receives natural-language feedback that may reveal violated requirements. We then introduce the constraint tree, which organizes partially accumulated constraints as feasible regions to be tested and refined.

\subsection{Language-feedback pure exploration}
\label{sec:setup}

\textbf{Action space and latent utility.}
Let $\mathcal{X}$ be a finite action space and let $U:\mathcal{X}\to[0,1]$ be an unknown latent utility function. The interaction proceeds for $T$ rounds. In each round $t\in[T]$, an algorithm $\mathcal{A}$ selects an action $x_t\in\mathcal{X}$ based on the past observations $(x_1,f_1,\ldots,x_{t-1},f_{t-1})$, and receives natural-language feedback $f_t\in\mathcal{F}$. The algorithm never observes $U(x_t)$ directly.

\textbf{Language feedback as constraint evidence.}
For each action $x\in\mathcal{X}$, feedback is drawn from a conditional distribution
$P_x$ over $\mathcal{F}$. Conditioned on $x_t$, the feedback $f_t$ is independent of the past. %
We study the regime in which feedback may identify requirements that the action violates, provide qualitative corrections, or confirm that the action is close to satisfying the user's intent. 
For example, a user may say ``the recommendation is not from the 90s'' or ``the answer is good, but it misses the required attribute''. We treat such feedback as a \emph{constraint signal}: %
progress comes from discovering and testing latent requirements, rather than from comparing observed numerical rewards.

\textbf{Canonical constraint class.}
Let $\mathcal{H}$ be a finite class of canonical candidate constraints, with $H:=|\mathcal{H}|$, within which equality and set membership are well-defined. Each constraint $c\in\mathcal{H}$ defines a region $\mathrm{Region}(c)\subseteq\mathcal{X}$, and we write $\mathrm{Sat}(x,c):=\mathbf{1}\{x\in\mathrm{Region}(c)\}$ for the satisfaction indicator. For a constraint set $C\subseteq\mathcal{H}$, define $\mathrm{Region}(C):=\bigcap_{c\in C}\mathrm{Region}(c)$, with $\mathrm{Region}(\emptyset):=\mathcal{X}$. The latent constraints below are assumed to lie in $\mathcal{H}$, i.e., the algorithm searches over a class that contains them.

\textbf{Latent constraints.}
The user's hard requirements are encoded by an unknown subset $C^*:=\{c_1^*,\ldots,c_K^*\}\subseteq\mathcal{H}$ of {$K:=|C^*|$} latent constraints, with feasible region $\mathrm{Region}(C^*)=\bigcap_{k=1}^K \mathrm{Region}(c_k^*)$. We assume that these constraints are necessary for optimality, i.e., every globally optimal action satisfies all of $C^*$: $\mathcal{X}^*:=\arg\max_{x\in\mathcal{X}} U(x)\neq\emptyset$ and $\mathcal{X}^*\subseteq\mathrm{Region}(C^*)$. Let $x^*\in\mathcal{X}^*$ and $U^*:=U(x^*)$.
Apart from this compatibility condition, the utility function $U$ is arbitrary: actions outside $\mathrm{Region}(C^*)$ may still have positive utility, and $U$ may take any values in $[0,1]$ rather than being restricted to $\{0,1\}$. Thus, the constraints define the feasible region containing all global optima, while the utility function determines how costly it is to violate or miss them.

\textbf{Coverage objective and regret consequence.}
After $T$ rounds, the algorithm outputs a single action $\hat{x}$. The %
primary objective in this paper is \textit{coverage:} %
reach a nonempty region contained in $\mathrm{Region}(C^*)$ and sample the output from that region.
The utility function is not observed during interaction; it is used only to express the consequence of coverage for simple regret,
\begin{align}
\mathbb{E}[r_T]
:=
\mathbb{E}\big[U^*-U(\hat{x})\big],
\end{align}
where the expectation is over the algorithm's randomness and the feedback channel. %
As formalized below, if the algorithm reaches a region contained in $\mathrm{Region}(C^*)$, then the remaining regret is controlled by the variation of $U$ within the latent feasible region.

\subsection{Constraint tree representation}
\label{sec:tree}

The constraint tree organizes partial information about the latent feasible region. Each node corresponds to a set of constraints accumulated so far and therefore to the region of actions satisfying them. Moving deeper in the tree adds constraints and shrinks the feasible region.

\textbf{Nodes as constraint sets.}
A constraint tree $\mathcal{T}$ is a rooted tree of proposal paths. %
Each node $v$ is associated with the accumulated constraint set $C_v\subseteq\mathcal{H}$ and feasible region
\begin{align}
\mathcal{X}_v:=\mathcal{X}(C_v):=\{x\in\mathcal{X}:\mathrm{Sat}(x,c)=1, \forall c\in C_v\}
=\mathrm{Region}(C_v).
\end{align}
The root $v_0$ has $C_{v_0}=\emptyset$ and $\mathcal{X}_{v_0}=\mathcal{X}$. Each child $u$ of $v$ adds one new constraint, so $C_u=C_v\cup\{c\}$ for some $c\in\mathcal{H}\setminus C_v$. The depth of $v$ is $h(v):=|C_v|$. Different proposal orders may induce the same accumulated set and feasible region; we keep them as distinct tree nodes because they correspond to different search histories.

\textbf{Viable and spurious nodes.}
For any node $v$ with a nonempty feasible region, define
\begin{align}
U(v):=\max_{x\in\mathcal{X}_v}U(x), \qquad \Delta(v):=U^*-U(v).
\end{align}
We call $v$ \emph{viable} if $\Delta(v)=0$, i.e., $\mathcal{X}_v$ contains a globally optimal action, and \emph{spurious} if $\Delta(v)>0$. The condition $C_v\subseteq C^*$ is sufficient for viability, but not necessary: a redundant or stronger-looking constraint outside $C^*$ may still preserve at least one global optimum. The algorithm uses feedback to reject spurious branches while preserving viable ones.

\textbf{Within-region spread.}
Even a viable region may contain actions of different utility. We define the within-node spread $\mathrm{Spread}(v):=U(v)-\inf_{x\in\mathcal{X}_v}U(x),$ and, for the latent feasible region,
\begin{align}
\mathrm{Spread}(C^*):=\sup_{x\in\mathrm{Region}(C^*)}U(x)-\inf_{x\in\mathrm{Region}(C^*)}U(x).
\end{align}
If $\mathcal{X}_v\subseteq\mathrm{Region}(C^*)$, then any $\hat{x}\in\mathcal{X}_v$ satisfies all latent constraints and has regret at most $\mathrm{Spread}(C^*)$. Thus, the theory focuses on reaching a covered region; $\mathrm{Spread}(C^*)$ captures the residual difficulty of choosing among feasible actions.

\textbf{Oracle access and the two settings.}
We distinguish the raw natural-language feedback $f_t\sim P_x$ from the oracle information that the learner can reliably extract from it. The raw feedback is the same in both regimes; the regimes differ only in the oracle interfaces available to the learner.

\begin{assumption}[Falsification access]
\label{ass:falsification-access}
The learner has access to a violation detector
$
\mathrm{Viol}:\mathcal{F}\times 2^{\mathcal{H}}\to\{0,1\}
$
such that $\mathrm{Viol}(f;C)=1$ only if $f$ contradicts some constraint already in $C$. In particular, feedback that reveals a missing constraint outside $C$ is discovery feedback, not a falsification of $C$.
\end{assumption}

\begin{assumption}[Identification access]
\label{ass:identification-access}
In addition to Assumption~\ref{ass:falsification-access}, the learner has an extractor
$
\mathrm{Extract}:\mathcal{F}\to\mathcal{H}\cup\{\emptyset\}
$
that returns a canonical violated constraint when one can be reliably identified, and returns $\emptyset$ otherwise.
\end{assumption}

For example, consider a tested child with $C_u=\{\texttt{year=1980s}\}$. Feedback ``I wanted something from the 90s'' can both refute $C_u$ and propose \texttt{year=1990s}. By contrast, feedback ``I wanted a comedy'' can reveal a missing genre constraint without contradicting $C_u$.

The \emph{falsification setting} assumes only Assumption~\ref{ass:falsification-access}, equivalently using a null extractor. The \emph{identification setting} assumes both Assumptions~\ref{ass:falsification-access} and~\ref{ass:identification-access}. The corresponding algorithmic variants are \textsc{TRACE-F} and \textsc{TRACE-I}. Quantitative reliability and soundness conditions, together with the oracle-level noise model used in our analysis, are stated in Appendix~\ref{app:theory-assumptions}.

\textbf{Algorithmic interfaces.}
Beyond these oracles, \textsc{TRACE} uses a constrained generator $Q_C$ for sampling actions satisfying a constraint set $C$, and a proposer $\mathrm{Propose}$ for selecting candidate constraints to test. These modules may be implemented by template decoders, LLM-based modules, database filters, constrained decoders, or domain-specific heuristics.

\section{Test-and-Refine Algorithm for Constraint Exploration}
\label{sec:algorithm}

We now describe the \emph{Test-and-Refine Algorithm for Constraint Exploration} (\textsc{TRACE}). The goal is to reach a constraint-tree node whose feasible region is contained in the user's latent feasible region. \textsc{TRACE} maintains a committed node $v$, proposes candidate constraints to refine $C_v$, and uses language feedback to decide whether each proposed refinement should be rejected or committed. The central design principle is that extracted identities serve as \emph{proposals} rather than facts: they bias which child is tested next, but enter the committed set only after passing a multi-round falsification test. %

\begin{algorithm}[t]
\caption{%
Test-and-Refine Algorithm for Constraint Exploration (\textsc{TRACE}).
}
\label{alg:robust}
\begin{algorithmic}[1]
\Require horizon $T$, test length $m_{\mathrm{exp}}$, max tests $N_{\max}$,
generators $\{Q_C\}$, detector $\mathrm{Viol}$, proposer $\mathrm{Propose}$, extractor $\mathrm{Extract}$
\State Initialize $v\leftarrow v_0$ with $C_{v_0}=\emptyset$;\,
$W\leftarrow\emptyset$;\, $R_{v'}\leftarrow\emptyset$ for every node $v'$;\,
$t\leftarrow 0$;\, $n\leftarrow 0$
\While{$t<T$ \textbf{and} $n<N_{\max}$}
    \State $c_{\mathrm{prop}}\leftarrow\mathrm{Propose}(v,W,R_v)$;\,
    \textbf{break if } $c_{\mathrm{prop}}=\emptyset$
    \State $u\leftarrow$ child of $v$ with
    $C_u=C_v\cup\{c_{\mathrm{prop}}\}$
    \State \textbf{if } $\mathcal{X}_u=\emptyset$ \textbf{ then } $R_v\leftarrow R_v\cup\{c_{\mathrm{prop}}\}$;\, \textbf{continue}
    \hfill \emph{empty region: exclude at $v$, no budget}
    \State $n\leftarrow n+1$;\, $\mathrm{pass}\leftarrow\mathsf{true}$
    \For{$s=1$ to $m_{\mathrm{exp}}$}
        \State \textbf{if } $t\ge T$ \textbf{ then } $\mathrm{pass}\leftarrow\mathsf{false}$;\, \textbf{break}
        \State $t\leftarrow t+1$;\, sample $x_t\sim Q_{C_u}$, receive $f_t$
        \State $c_{\mathrm{ext}}\leftarrow \mathrm{Extract}(f_t)$;\,
        \textbf{if } $c_{\mathrm{ext}}\neq\emptyset$ \textbf{ then }
        $W\leftarrow W\cup\{c_{\mathrm{ext}}\}$
        \hfill \emph{store extracted identity}
        \State \textbf{if } $\mathrm{Viol}(f_t;C_u)=1$ \textbf{ then }
        $\mathrm{pass}\leftarrow\mathsf{false}$;\, \textbf{break}
    \EndFor
    \State \textbf{if } $\mathrm{pass}=\mathsf{true}$ \textbf{ then } $v\leftarrow u$ \textbf{ else } $R_v\leftarrow R_v\cup\{c_{\mathrm{prop}}\}$
    \hfill \emph{commit, or exclude at $v$}
\EndWhile
\State \Return $\hat{x}\sim Q_{C_v}$
\end{algorithmic}
\end{algorithm}

\textbf{Feedback use and commitment.}
During a test of a proposed child $u$, feedback is used in two ways. The detector $\mathrm{Viol}(f_t;C_u)$ decides whether $u$ is falsified. When identification access is available, $\mathrm{Extract}(f_t)$ may add a candidate constraint to a global working set $W$ for future proposals. Crucially, entries in $W$ are never committed directly: they only affect which child is tested next.

\textbf{Test-before-commit.}
Given a proposed constraint $c_{\mathrm{prop}}$, \textsc{TRACE} forms the child $u$ with $C_u=C_v\cup\{c_{\mathrm{prop}}\}$ and samples from $Q_{C_u}$ for up to $m_{\mathrm{exp}}$ rounds. If any feedback contradicts $C_u$, the child is rejected and the algorithm remains at $v$. A per-node set $R_v$ records the candidates ruled out at $v$ (empty region or rejected child), so they are not proposed again at $v$; this ensures termination and never excludes a missing true constraint while $v$ is viable (Remark~\ref{app:rem:lazy-filter}). If the child survives all $m_{\mathrm{exp}}$ tests, \textsc{TRACE} commits to $u$ and moves deeper in the tree. At the end of the budget, it returns $\hat{x}\sim Q_{C_v}$, where $v$ is the deepest committed node at termination. If $\mathrm{Region}(C_v)\subseteq\mathrm{Region}(C^*)$, then $\hat{x}$ satisfies all latent requirements; the remaining simple regret is controlled by $\mathrm{Spread}(C^*)$. 

\textbf{Variants.}
Both variants propose from $\mathcal{H}_v:=\mathcal{H}\setminus(C_v\cup R_v)$, and $\mathrm{Propose}$ returns $\emptyset$ when $\mathcal{H}_v=\emptyset$. \textsc{TRACE-F} uses a null extractor and proposes uniformly from $\mathcal{H}_v$. \textsc{TRACE-I} first proposes from $W\cap\mathcal{H}_v$ when nonempty, and otherwise falls back to uniform proposal over $\mathcal{H}_v$. Both variants use the same falsification test before commitment; operational properties of $W$, the detector contract, and the role of the counters $t,s$ in Algorithm~\ref{alg:robust} are recorded in Appendix~\ref{app:algorithmic-remarks}.

\textbf{Implementation and theory bridge.}
\textsc{TRACE} is parser- and generator-agnostic: in finite recommendation or entity-deduction tasks, $Q_C$ can be a filtered catalog sampler, while in constrained generation it can be an LLM prompted with accumulated requirements or a constrained decoder. The detector and extractor can likewise be template-based, rule-based, LLM-based, or hybrid. Our experiments use lightweight decoders where possible, while Section~\ref{sec:theory} abstracts away these implementation choices and analyzes how many child tests are needed to reach a covered region.

\vspace{-0.1in}
\section{Theoretical Analysis}
\label{sec:theory}
\vspace{-0.1in}
We analyze the two oracle-access regimes introduced in Section~\ref{sec:tree}. The analysis abstracts away implementation details of the generator, detector, extractor, and proposer, and asks how many interaction rounds suffice for \textsc{TRACE} to reach a covered region. \textsc{TRACE-F} (Assumption~\ref{ass:falsification-access} only) uses the violation oracle for falsification and searches over the candidate class by blind proposal. \textsc{TRACE-I} (Assumptions~\ref{ass:falsification-access} and~\ref{ass:identification-access}) additionally uses extracted constraint identities to bias proposals. Both analyses share the same multi-sample falsification argument; they differ in how many child tests are needed before the latent feasible region is covered. Full proofs are deferred to Appendix~\ref{app:complete-theory}.

\vspace{-0.1in}
\subsection{Analysis model and guarantee}
\label{sec:analysis-setup}
\vspace{-0.05in}

We work with the finite canonical constraint class $\mathcal{H}\supseteq C^*$ from Section~\ref{sec:setup}. Throughout, $H:=|\mathcal{H}|$ is the candidate class size and $K:=|C^*|$ is the number of latent constraints; the bounds depend on these together with the extraction probability $p_{\mathrm{ext}}$ in the identification setting. The constrained generator $Q_C$ is supported on $\mathcal{X}(C):=\{x\in\mathcal{X}:\mathrm{Sat}(x,c)=1\ \forall c\in C\}$ and returns independent samples on each call. The identification-setting theorem (Assumption~\ref{ass:identification-access}) additionally requires %
sound extracted identities, i.e., $\mathrm{Extract}$ returns either a latent constraint or $\emptyset$.
Thus the theorem captures missed identity discoveries; %
spurious extracted identities are treated empirically in Section~\ref{sec:ablations-summary}. %
The implementation may run $\mathrm{Extract}$ on every feedback message; the theorem only requires that informative feedback events, i.e., violations observed while testing a spurious child, add missing true constraints to $W$ with probability at least $p_{\mathrm{ext}}$.

\textbf{Conditions for the guarantees.}
The formal assumptions are stated in Appendix~\ref{app:theory-assumptions}. Here we summarize their roles.

$\triangleright$ \textit{Refutational feedback:} For a node $v$, let
$q_v := \Pr[\mathrm{Viol}(f;C_v)=1 \mid x\sim Q_{C_v}, f\sim P_x]$
be the probability that one test sample falsifies the region. Viable regions are not incorrectly rejected by the contradiction detector: if $\Delta(v)=0$, then $q_v=0$. Conversely, spurious regions are falsifiable: if $\Delta(v)>0$, then $q_v\ge \min\{1,c_{\mathrm{fal}}\Delta(v)^\beta\}$ for constants $c_{\mathrm{fal}},\beta>0$. The exponent $\beta$ captures the resolution of the violation oracle (Assumption~\ref{ass:falsification-access}); larger $\beta$ means small gaps are harder to detect.

$\triangleright$ \textit{$\varepsilon$-separation (gap structure):} At the target tolerance $\varepsilon$, every child of a viable node either preserves a global optimum, $\Delta(u)=0$, or excludes all optima by margin $\Delta(u)>\varepsilon$. This is a structural condition on $(\mathcal{X}, U, \mathcal{H}, C^*)$, not on the generator, detector, or extractor; it ensures that a finite test window can separate viable children from spurious ones.

$\triangleright$ \textit{Identification-access progress:} Theorem~\ref{thm:identity} additionally assumes two conditions. The first is structural (Assumption~\ref{app:ass:identity-progress}): every constraint in $\mathcal{H}\setminus C^*$ with a nonempty region excludes all global optima with gap larger than $\varepsilon$ even when imposed alone, so adding it to any viable node yields an empty region or such a gap. In particular, the candidate class contains no \emph{zero-gap distractors}, i.e., constraints outside $C^*$ that preserve a global optimum when added to a viable node. The second concerns extraction (Assumption~\ref{app:ass:feedback}): on informative feedback, the extractor of Assumption~\ref{ass:identification-access} reveals a missing latent constraint with probability at least $p_{\mathrm{ext}}$. Together, the two conditions force every incorrect candidate to be incompatible with the true requirements (Appendix~\ref{app:theory-assumptions}). Reliable template decoders can support the extraction condition, but not the structural one, which depends only on $(\mathcal{X},U,\mathcal{H},C^*)$. When either condition fails, the $\widetilde{\mathcal{O}}(K/p_{\mathrm{ext}})$ guarantee of Theorem~\ref{thm:identity} does not apply.

\textbf{Coverage guarantee.}
We say that an algorithm enjoys an $(\varepsilon,\delta)$-coverage guarantee with budget $T$ if, with probability at least $1-\delta$, the deepest committed node $v$ at termination satisfies
\begin{align}
\mathrm{Region}(C_v)\subseteq \mathrm{Region}(C^*)
\end{align}
within $T$ interaction rounds. This target is representation-invariant: exact syntactic recovery of $C^*$ is sufficient, but not necessary. If $\hat{x}\sim Q_{C_v}$, coverage implies $\hat{x}\in\mathrm{Region}(C^*)$ and therefore
\begin{align}
\mathbb{E}[r_T]\le \mathrm{Spread}(C^*)+\delta.
\end{align}
The two theorems below establish coverage under the falsification and identification settings.

\vspace{-0.05in}
\subsection{Falsification- and identification-setting guarantees}
\vspace{-0.05in}

\textbf{Multi-sample falsification.}
The core observation is that a spurious child is unlikely to survive many independent feedback tests. We state the lemma for an arbitrary tested node; in Algorithm~\ref{alg:robust}, this node is typically the proposed child.

\begin{lemma}[Multi-sample falsification]
\label{lem:multi-sample}
Let $v$ be any node with $\Delta(v)>\varepsilon$. If the algorithm samples $x_s\sim Q_{C_v}$ for $m$ independent rounds and observes feedback $f_s\sim P_{x_s}$, then
\begin{align}
\Pr\left[\mathrm{Viol}(f_s;C_v)=0\ \text{for all }s=1,\ldots,m\right]\le\exp\left(-m\,c_{\mathrm{fal}}\varepsilon^\beta\right).
\end{align}
Consequently, if $m\ge \log(1/\delta')/(c_{\mathrm{fal}}\varepsilon^\beta)$, then a node with gap larger than $\varepsilon$ survives $m$ tests with probability at most $\delta'$.
\end{lemma}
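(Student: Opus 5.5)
The plan is to reduce the statement to a product of independent per-round survival probabilities, then apply $1-q\le e^{-q}$. Fix a node $v$ with $\Delta(v)>\varepsilon$. For each round $s$, the pair $(x_s,f_s)$ is generated by drawing $x_s\sim Q_{C_v}$ independently of the past, and then $f_s\sim P_{x_s}$. By the feedback model, $f_s$ is conditionally independent of the past given $x_s$. Hence the pairs $(x_s,f_s)$, $s=1,\ldots,m$, are i.i.d., with the marginal law that defines $q_v$.

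The event that $\mathrm{Viol}(f_s;C_v)=0$ for all $s$ is therefore an intersection of $m$ independent events, each of probability $1-q_v$. If one prefers not to assert full independence, I would instead condition successively on the history $\mathcal{F}_{s-1}$. Using the tower property, this gives $\Pr[\mathrm{Viol}(f_s;C_v)=0\mid\mathcal{F}_{s-1}]=1-q_v$ at every step. Either route yields
\[
\Pr\left[\mathrm{Viol}(f_s;C_v)=0\ \forall s\le m\right]=(1-q_v)^m .
\]

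Next I invoke the refutational-feedback condition: since $\Delta(v)>0$, we have $q_v\ge\min\{1,c_{\mathrm{fal}}\Delta(v)^\beta\}$. Because $\beta>0$ and $\Delta(v)>\varepsilon$, this gives $q_v\ge\min\{1,c_{\mathrm{fal}}\varepsilon^\beta\}$. There are two cases.
\begin{itemize}
\item If the minimum equals $1$, then $q_v=1$ and the survival probability is $0$, which is below any exponential.
\item Otherwise $q_v\ge c_{\mathrm{fal}}\varepsilon^\beta$, and $1-q\le e^{-q}$ gives $(1-q_v)^m\le\exp(-m q_v)\le\exp(-m c_{\mathrm{fal}}\varepsilon^\beta)$.
\end{itemize}
The second case is the one where the clipping of the minimum matters. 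There $c_{\mathrm{fal}}\varepsilon^\beta$ could exceed $1$ only in the first case, so $1-q_v\in[0,1]$ is always well defined.

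The consequence is immediate. Substituting $m\ge\log(1/\delta')/(c_{\mathrm{fal}}\varepsilon^\beta)$ into the exponent gives $\exp(-m c_{\mathrm{fal}}\varepsilon^\beta)\le\delta'$.

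The only step needing care is the independence justification. The samples come from the generator's fresh randomness on each call, and the feedback is drawn from the memoryless channel $P_x$. Together these make the conditional falsification probability exactly $q_v$ regardless of the history, which is the point I would state explicitly. The rest is routine.
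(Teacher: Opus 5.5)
Your proof is correct and follows the paper's argument: the rounds are independent, so the survival probability is $(1-q_v)^m$, and the refutational-feedback lower bound on $q_v$ together with $1-y\le e^{-y}$ gives the exponential. The one cosmetic difference is that you handle the $\min\{1,\cdot\}$ clipping by a case split, whereas the paper normalizes $c_{\mathrm{fal}}\le 1$ and uses $\Delta(v)\le 1$ so that the minimum is never active, first proving the exact-gap bound $\exp(-m c_{\mathrm{fal}}\Delta(v)^\beta)$ and then specializing to $\varepsilon$; your case split gets the stated bound without that normalization.
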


\textbf{Falsification-setting guarantee.}
We first consider \textsc{TRACE-F}, which uses only the violation oracle of Assumption~\ref{ass:falsification-access} and proposes uniformly from the finite candidate class.

\begin{theorem}[Coverage in the falsification setting]
\label{thm:main-noisy}
Suppose the conditions of Section~\ref{sec:analysis-setup} hold for refutational feedback and $\varepsilon$-separation. Fix $\delta\in(0,1)$ and $\varepsilon\in(0,1]$. With
\begin{align}
N_{\max} = \left\lceil H\log\frac{2\max(K,1)}{\delta} \right\rceil,
\quad
m_{\mathrm{exp}} = \left\lceil
\frac{\log(2N_{\max}/\delta)}{c_{\mathrm{fal}}\varepsilon^\beta}
\right\rceil,
\quad
T = N_{\max}m_{\mathrm{exp}},
\end{align}
\textsc{TRACE-Falsification} enjoys an $(\varepsilon,\delta)$-coverage guarantee with $T=\widetilde{\mathcal{O}}(H/(c_{\mathrm{fal}}\varepsilon^\beta))$.
\end{theorem}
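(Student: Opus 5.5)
The plan is to split the failure event into two parts and give each probability at most $\delta/2$: (A) some spurious child is committed, and (B) the algorithm stops at a viable node whose region is not yet inside $\mathrm{Region}(C^*)$.

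\textbf{Controlling (A).} I would show by induction that the committed node stays viable. The root is viable because $\mathcal{X}^*\subseteq\mathcal{X}$. Suppose the committed node $v$ is viable and the child $u$ is tested.
\begin{itemize}
\item If $\Delta(u)=0$, refutational feedback gives $q_u=0$. So $u$ is never rejected unless the budget runs out, and viable children are never placed in $R_v$ by mistake.
\item If $\Delta(u)>0$, then $\varepsilon$-separation gives $\Delta(u)>\varepsilon$.
\end{itemize}
Conditioned on the history up to the start of a test, the $m_{\mathrm{exp}}$ samples from $Q_{C_u}$ and their feedback are independent. Lemma~\ref{lem:multi-sample} with $\delta'=\delta/(2N_{\max})$ and the stated $m_{\mathrm{exp}}$ then bounds the chance that $u$ survives by $\delta/(2N_{\max})$. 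There are at most $N_{\max}$ tests, so a union bound gives $\Pr[A]\le\delta/2$. On $A^c$, every committed node is viable and no viable child is ever excluded. Also $T=N_{\max}m_{\mathrm{exp}}$, so the round budget is never the binding constraint; the algorithm stops only when $n=N_{\max}$ or when $\mathcal{H}_v=\emptyset$.

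\textbf{Controlling (B).} On $A^c$, each committed node is viable. Take a viable $v$ with $\mathrm{Region}(C_v)\not\subseteq\mathrm{Region}(C^*)$. Then some $c^*\in C^*\setminus C_v$ exists (otherwise $C^*\subseteq C_v$ gives coverage). Adding any $c^*\in C^*$ keeps $\mathcal{X}^*$ in the child's region, so that child is nonempty and viable. On $A^c$ it is never put in $R_v$, so it stays in $\mathcal{H}_v$ until proposed. When it is proposed, it passes and is committed.

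\begin{itemize}
\item \emph{Termination.} $\mathcal{H}_v=\emptyset$ therefore cannot happen while $v$ is uncovered, so the algorithm cannot terminate there by exhaustion.
\item \emph{Per-test progress.} Each uniform proposal from $\mathcal{H}_v$, with $|\mathcal{H}_v|\le H$, hits a missing true constraint with probability at least $1/H$. Empty-region proposals consume no budget and only shrink $\mathcal{H}_v$, which only helps.
\item \emph{Counting.} Every successful hit adds one new element of $C^*$ to the committed set. So at most $K$ hits are needed, and $C^*\subseteq C_v$ then implies coverage. Other committed constraints cannot hurt on $A^c$.
\item \emph{Stage bound.} Split the run into stages indexed by $j$, the number of true constraints still missing. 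In each stage, each counted test independently succeeds with probability at least $1/H$. I would allot $N_{\max}/K$ tests per stage; more simply, I would bound the probability that some stage needs more than $H\log(2K/\delta)$ tests by $K(1-1/H)^{H\log(2K/\delta)}\le\delta/2$.
\end{itemize}

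\textbf{Main obstacle.} The stage bound above needs a total of about $KH\log(2K/\delta)$ tests, but the stated $N_{\max}$ is only $\lceil H\log(2K/\delta)\rceil$. I would look for the sharper argument first. When there are $j$ missing true constraints, each of them is a success, so the per-test success probability is at least $j/H$ rather than $1/H$. This is coupon-collector-type progress. The total number of tests is then a sum of geometric variables with means $H/j$, whose expectation is $H\cdot\mathcal{H}_K\approx H\log K$.

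A tail bound on this sum gives roughly $H\log(K/\delta)$ with high probability. The cleanest way to get it is the coupon-collector union bound: a fixed true constraint that is still missing after $N$ tests has probability at most $(1-1/H)^N\le e^{-N/H}$. This is a per-constraint argument; it does not depend on the order in which constraints are found. The union over the $K$ constraints gives $K e^{-N/H}\le\delta/2$ for $N=H\log(2K/\delta)$.

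Two points need justification here. First, a proposed true constraint that is already committed is excluded from $\mathcal{H}_v$, so it is not wasted. Second, the per-test probability of hitting a given missing $c^*$ is $1/|\mathcal{H}_v|\ge1/H$, since $c^*$ stays available on $A^c$. Combining with (A) gives total failure probability at most $\delta$, and $T=N_{\max}m_{\mathrm{exp}}=\widetilde{\mathcal{O}}(H/(c_{\mathrm{fal}}\varepsilon^\beta))$.
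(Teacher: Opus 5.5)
Your final argument is the paper's proof. The safe-commitment union bound over at most $N_{\max}$ tests contributes $\delta/2$, and the per-constraint bound $\Pr(c\ \text{still uncommitted after}\ N\ \text{steps})\le(1-1/H)^N\le e^{-N/H}$, summed over the $K$ true constraints, contributes the other $\delta/2$; as you noticed, the stage-based detour is unnecessary. The one point to tighten is that the per-step bound should be stated for the history-measurable event ``no child with $\Delta>0$ committed so far and $c$ still uncommitted'' (the paper's $A_{j,c}$, combined via the tower property), rather than obtained by conditioning on the global event $A^c$.
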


The bound has two main factors. The \emph{proposal cost} is $\widetilde{\mathcal{O}}(H)$: in the canonical finite-class proof, a sufficient route to coverage is to propose all latent constraints, which uniform proposal achieves in $\mathcal{O}(H\log K)$ calls. This cost is over proposal calls from $\mathcal{H}$, not over all ordered root-to-leaf paths in the constraint tree. The \emph{testing cost} is $\widetilde{\mathcal{O}}(1/(c_{\mathrm{fal}}\varepsilon^\beta))$: by Lemma~\ref{lem:multi-sample}, this many samples reject any child with gap larger than $\varepsilon$ with the desired confidence. Viable children survive because $q_u=0$ when $\Delta(u)=0$, while spurious children with $\Delta(u)>\varepsilon$ are refuted with probability at least $c_{\mathrm{fal}}\varepsilon^\beta$ per test. Once $\mathrm{Region}(C_v)\subseteq\mathrm{Region}(C^*)$, the returned action lies in the latent feasible region, giving $\mathbb{E}[r_T]\le \mathrm{Spread}(C^*)+\delta$.

\textbf{Identification-setting guarantee.}
\label{sec:identity-aware}
Theorem~\ref{thm:main-noisy} shows the falsification setting alone suffices for coverage, but requires blind proposal over the ambient candidate class. In a favorable identification regime where extracted identities make reliable progress toward missing latent constraints, we show that this candidate-class dependence can be replaced by the effective discovery cost $K/p_{\mathrm{ext}}$.

\begin{theorem}[Coverage in the identification setting]
\label{thm:identity}
Suppose the conditions of Section~\ref{sec:analysis-setup} hold, including identification-access progress. Fix $\delta\in(0,1)$ and $\varepsilon\in(0,1]$. With
\[
N_{\max}^{\mathrm{id}}
= \left\lceil
\frac{64}{p_{\mathrm{ext}}}\left(K+\log\frac{4}{\delta}\right)
\right\rceil,
\quad
m_{\mathrm{exp}}^{\mathrm{id}}
= \left\lceil
\frac{\log(2N_{\max}^{\mathrm{id}}/\delta)}{c_{\mathrm{fal}}\varepsilon^\beta}
\right\rceil,
\quad
T = N_{\max}^{\mathrm{id}}m_{\mathrm{exp}}^{\mathrm{id}},
\]
\textsc{TRACE-Identification} enjoys an $(\varepsilon,\delta)$-coverage guarantee with $T=\widetilde{\mathcal{O}}(K/(p_{\mathrm{ext}}\,c_{\mathrm{fal}}\varepsilon^\beta))$.
\end{theorem}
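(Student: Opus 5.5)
We plan to reuse the skeleton of Theorem~\ref{thm:main-noisy} and replace only the proposal-counting argument. Define a good event $\mathcal{E}_{\mathrm{test}}$: every tested child $u$ with $\Delta(u)>\varepsilon$ is rejected. There are at most $N_{\max}^{\mathrm{id}}$ tests, and $m_{\mathrm{exp}}^{\mathrm{id}}$ is chosen as in the statement. Lemma~\ref{lem:multi-sample} and a union bound therefore give $\Pr[\mathcal{E}_{\mathrm{test}}^c]\le \delta/2$. Viable children have $q_u=0$ by refutational feedback, so they always pass.

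We then argue by induction on commits that the committed node $v$ stays viable on $\mathcal{E}_{\mathrm{test}}$. This uses $\varepsilon$-separation: a passed child of a viable node has $\Delta(u)\le\varepsilon$, hence $\Delta(u)=0$. Next we use the structural progress condition (Assumption~\ref{app:ass:identity-progress}). Any $c\in\mathcal{H}\setminus C^*$ added to a viable node gives an empty region or a gap $>\varepsilon$. So on $\mathcal{E}_{\mathrm{test}}$ every committed constraint lies in $C^*$, giving $C_v\subseteq C^*$. Moreover, once $C^*\subseteq C_v$ we have coverage, and $C_v=C^*$ is never left since $\mathcal{X}_v\subseteq\mathrm{Region}(C^*)$ already. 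The lazy-filter remark also ensures missing true constraints are never placed in $R_v$ while $v$ is viable. It therefore suffices to show that all $K$ latent constraints are committed within $N_{\max}^{\mathrm{id}}$ tests.

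For the counting step, we classify each test (each budgeted proposal) as one of two kinds. A \emph{true} proposal is some $c\in C^*\setminus C_v$; it passes, so there are at most $K$ of these. A \emph{false} proposal is a spurious child, which on $\mathcal{E}_{\mathrm{test}}$ produces a violation, i.e., an informative feedback event. By Assumption~\ref{app:ass:feedback}, each such event independently, conditioned on the past, adds a missing latent constraint to $W$ with probability at least $p_{\mathrm{ext}}$. Since extraction is sound, $W\subseteq C^*$. Whenever $W\cap\mathcal{H}_v\neq\emptyset$, TRACE-I therefore proposes a true constraint, which is then committed. False proposals thus occur only when $W\cap\mathcal{H}_v=\emptyset$, i.e., when every extracted latent constraint has already been committed. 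Each successful extraction on an informative event adds a new missing element to $W$, and that element is consumed by one later true proposal. Hence the number of successful extractions is at most $K$, which bounds the number of "phases" of false proposals.

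It remains to bound the false proposals. We would define Bernoulli indicators $B_j$ for success of the $j$-th informative event. These dominate i.i.d.\ $\mathrm{Bernoulli}(p_{\mathrm{ext}})$ variables via a martingale/coupling argument. Coverage fails only if among the first $M:=N_{\max}^{\mathrm{id}}-K$ informative events fewer than $K$ are successes. A multiplicative Chernoff (or Freedman) bound gives $\Pr[\sum_{j\le M}B_j<K]\le\exp(-Mp_{\mathrm{ext}}/8)$ once $Mp_{\mathrm{ext}}\ge 2K$. The constant $64$ in $N_{\max}^{\mathrm{id}}$ provides slack so that $M p_{\mathrm{ext}}\ge 32(K+\log(4/\delta))$, which makes this at most $\delta/4\le\delta/2$. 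A union bound with $\mathcal{E}_{\mathrm{test}}$ then yields probability $1-\delta$. We must also check that the budget $T=N_{\max}^{\mathrm{id}}m_{\mathrm{exp}}^{\mathrm{id}}$ never truncates a test, which holds because each test uses at most $m_{\mathrm{exp}}^{\mathrm{id}}$ rounds.

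The main obstacle is the concentration step. The number of informative events and their timing are adaptive: they depend on $W$, the proposer's uniform fallback, and the current node. The $p_{\mathrm{ext}}$ guarantee is also only a conditional lower bound for extracting \emph{some missing} constraint, not a new one. To handle this, we would first define success as "extracts an element of $C^*\setminus(C_v\cup W)$". We then need to verify that Assumption~\ref{app:ass:feedback} is phrased for constraints missing relative to the current $W$. If it is not, we would instead count successes relative to $C^*\setminus C_v$ and note that false proposals occur only when $W\subseteq C_v$, so any success is automatically new. Finally, we would apply Freedman's inequality to the martingale $\sum_j (B_j-p_{\mathrm{ext}})$, with the sum indexed by informative events and padded with dummy i.i.d.\ coins after termination.
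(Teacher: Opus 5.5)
Your proposal is correct and is essentially the paper's argument. The shared steps are: the safety lemma gives a $\delta/2$ no-bad-commitment event; identification progress and soundness keep $C_v\subseteq C^*$ and $W\subseteq C^*$; a fallback test occurs only when $W\subseteq C_v$, and if rejected it yields, with probability at least $p_{\mathrm{ext}}$, a true constraint that is new relative to the working set at the start of that test; and a multiplicative Chernoff lower tail finishes.

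The only difference is bookkeeping. The paper folds your two counts into the single potential $\Psi=|C_v|+|C_v\cup W_t|\le 2K$ with per-test progress indicators $Y_n$ over all $N$ tests. That needs $2K$ increments rather than $K$ successes among $N-K$ informative events, but it avoids padding a random subsequence. The paper proves the tail via the tower-property bound $\mathbb{E}[e^{-\lambda\sum_n Y_n}]\le(1-p_{\mathrm{ext}}+p_{\mathrm{ext}}e^{-\lambda})^{N}$, which is the clean form of your coupling step. Freedman is the awkward option here, since $\sum_j(B_j-p_{\mathrm{ext}})$ is only a submartingale.
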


Theorem~\ref{thm:identity} replaces the blind-search cost $H$ by the effective discovery cost $K/p_{\mathrm{ext}}$. The intuition is that, under identification access, diagnostic feedback can add a missing true constraint to $W$ with probability $p_{\mathrm{ext}}$; once in $W$, that constraint is proposed before blind candidates and survives the same test window. Thus identification access is most useful when $H$ is large relative to $K/p_{\mathrm{ext}}$, and its advantage diminishes when the candidate class is small or extraction is unreliable. Section~\ref{sec:h-scaling} validates this regime distinction. Section~\ref{sec:ablations-summary} tests the case where the soundness clause of Assumption~\ref{ass:identification-access} is not satisfied (the extractor returns spurious constraints).

\section{Experiments}
\label{sec:experiments}
We evaluate the practical benefits of \textsc{TRACE}'s constraint-based formulation and test-before-commit mechanism through four questions:
 \textbf{(Q1)} %
How does \textsc{TRACE} compare with prompting methods under different feedback interfaces and common output caps?
\textbf{(Q2)} %
Does identification feedback reduce the number of candidate refinements tested as the candidate class grows?
\textbf{(Q3)} %
When does test-before-commit improve over directly accumulating extracted constraints? 
\textbf{(Q4)} How robust is \textsc{TRACE} to %
incorrect extracted identities?

\textbf{Setup.}
We evaluate on five environments spanning six task instances: LLF-Bench RecMovie~\citep{cheng2023llf}, a synthetic 20-Questions environment (\textsc{20Q}), Apple's Entity-Deduction Arena (\textsc{EDA Things})~\citep{zhang2024probing}, per-position \textsc{Mastermind}~\citep{li2024playing}, and LLF-Bench Poem~\citep{cheng2023llf} with Haiku and Tanka forms. The primary metric is \emph{env.success} over $100$ seeds per cell; for RecMovie this refers to the returned recommendation list (an interaction-budget any-success diagnostic is in Appendix~\ref{app:recmovie-full}). We compare \textsc{TRACE-F}, which uses binary falsification with uniform proposals, with \textsc{TRACE-I}, which additionally prioritizes constraints extracted from past feedback. Prompt-based baselines \textsc{Random}, \textsc{DirectLLM}, \textsc{Reflexion}~\citep{shinn2023reflexion}, and \textsc{Self-Refine}~\citep{madaan2023selfrefine} use Qwen3-4B~\citep{yang2025qwen3} under fixed per-environment interaction budgets ($T=20$ on RecMovie) that are not matched to \textsc{TRACE}'s default configuration (see Section~\ref{sec:budget-curve} for equal budgets); structured same-interface baselines use the same decoded feedback as \textsc{TRACE} but remove the test-before-commit step. Throughout this section, brackets in result tables denote Wilson $95\%$ confidence intervals; hyperparameters, $\mathrm{Viol}/\mathrm{Extract}$ implementation, full-table intervals, attempts, and soft-utility diagnostics are deferred to Appendix~\ref{app:experimental-details}.

\begin{table}[t]
\centering
\small
\setlength{\tabcolsep}{3pt}
\caption{Main comparison under three oracle-access regimes. Random does not consume feedback. Wilson $95\%$ intervals in Tables~\ref{tab:app-main-ci} and~\ref{tab:app-same-oracle-ci}; \textsc{Mastermind}/Haiku/Tanka in Appendix~\ref{app:saturated-tasks}.}
\label{tab:main-comparison}
\begin{tabular}{l rrr rrr rrr}
\toprule
& \multicolumn{3}{c}{\textbf{RecMovie}}
& \multicolumn{3}{c}{\textbf{\textsc{20Q}}}
& \multicolumn{3}{c}{\textbf{\textsc{EDA}}} \\
\cmidrule(lr){2-4}\cmidrule(lr){5-7}\cmidrule(lr){8-10}
\textbf{Method} & Natural & Falsif. & Identif. & Natural & Falsif. & Identif. & Natural & Falsif. & Identif. \\
\midrule
Random       & $\phantom{1}5\%$ & --   & --    & $41\%$ & --   & --     & $74\%$ & --   & --     \\
DirectLLM    & $17\%$ & $27\%$ & $40\%$ & $12\%$ & $43\%$ & $\mathbf{100\%}$ & $43\%$ & $84\%$ & $\mathbf{100\%}$ \\
Reflexion    & $26\%$ & $35\%$ & $42\%$ & $11\%$ & $29\%$ & $\mathbf{100\%}$ & $55\%$ & $69\%$ & $\mathbf{100\%}$ \\
Self-Refine  & $21\%$ & $26\%$ & $32\%$ & $73\%$ & $19\%$ & $\mathbf{100\%}$ & $78\%$ & $40\%$ & $\mathbf{100\%}$ \\
\textbf{\textsc{TRACE}}  & -- & $\mathbf{51\%}$ & $\mathbf{90\%}$ & -- & $\mathbf{84\%}$ & $\mathbf{100\%}$ & -- & $\mathbf{87\%}$ & $99\%$ \\
\bottomrule
\end{tabular}
\end{table}

\subsection{Main comparison: explicit constraint structure helps}
\label{sec:main-results}

Table~\ref{tab:main-comparison} reports env.success on three task instances under three oracle-access regimes: \emph{Natural} feedback, where prompt-based methods consume the environment's raw text; \emph{Falsification} access, corresponding to Assumption~\ref{ass:falsification-access}; and \emph{Identification} access, corresponding to Assumptions~\ref{ass:falsification-access}+\ref{ass:identification-access}. Within each regime, all methods share the same oracle interface, so cross-method comparisons within a regime are matched. \textsc{TRACE-F} and \textsc{TRACE-I} appear under the Falsification and Identification columns, respectively; they are the test-before-commit instantiations of the falsification and identification settings, not competing variants.

At each method's default budget, within each oracle regime, \textsc{TRACE} improves over the strongest same-regime prompt baseline: \textsc{TRACE-F} by $16$pp on RecMovie ($51\%$ vs.\ $35\%$), $41$pp on \textsc{20Q}, and $3$pp on \textsc{EDA}; \textsc{TRACE-I} by $48$pp on RecMovie ($90\%$ vs.\ $42\%$), and by at least $31$pp when all methods share an equal output cap (Section~\ref{sec:budget-curve}). 
On \textsc{20Q} and \textsc{EDA}, identity-feedback methods reach 99--100\%, leaving little headroom for additional gains in these configurations. Self-Refine's success on \textsc{20Q} drops from 73\% under natural feedback to 19\% under binary projection, illustrating the importance of the information retained in the feedback interface.
Three additional task instances (\textsc{Mastermind} with $K=3$, Haiku, and Tanka) are deferred to Appendix~\ref{app:saturated-tasks}, where the qualitative pattern is identical: \textsc{TRACE-I} reaches $99$--$100\%$ while natural-feedback baselines remain below $30\%$. The candidate-class scaling on \textsc{Mastermind} is studied in Section~\ref{sec:h-scaling}.

\subsection{Equal output budgets and interaction cost}
\label{sec:budget-curve}

Table~\ref{tab:main-comparison} uses method-specific default budgets. We therefore compare \textsc{TRACE-I} with prompting and sampling controls on RecMovie, which provides a non-saturated comparison under identification feedback. 
Figure~\ref{fig:equal-budget-recmovie}a evaluates common caps of $B\in\{20,60\}$ environment-evaluated outputs, scoring one returned output per seed: \textsc{TRACE-I} ($m_{\mathrm{exp}}=5$) uses at most $B-1$ outputs for tests and returns one within the cap, while each iterative baseline returns its first accepted output or its last output at the cap. All baselines receive the same identity projection as \textsc{TRACE-I}; to run them with $60$ outputs, DirectLLM-W8 and Self-Refine-W8 keep only the eight most recent interactions, and Reflexion-NoCycle disables early trial termination on repeated recommendations. \textsc{TRACE-I} reaches $73\%$ at $B=20$ and $86\%$ at $B=60$, versus at most $42\%$ and $48\%$ for the baselines evaluated at each cap; Best-of-$N$ with an oracle selector reaches $17\%$ and $21\%$.

Each test round shows a complete recommendation rather than asking the user to reconfirm a constraint, and smaller test windows trade reliability for fewer rounds (Figure~\ref{fig:equal-budget-recmovie}b): with $m_{\mathrm{exp}}=1$, \textsc{TRACE-I} reaches $65\%$ with $4.43$ feedback rounds on average. Under a tight cap a larger window uses up more of the budget ($78\%$ for $m_{\mathrm{exp}}=3$ vs.\ $73\%$ for $m_{\mathrm{exp}}=5$ at $B=20$), whereas at $B=60$ it reaches $86\%$. Appendix~\ref{app:equal-budget} gives the full protocol, intervals, and costs, and shows that seeding the root with up to two oracle-provided constraints raises success to $90\%$ at $B=20$.

\begin{figure}[h]
\centering
\includegraphics[width=\linewidth]{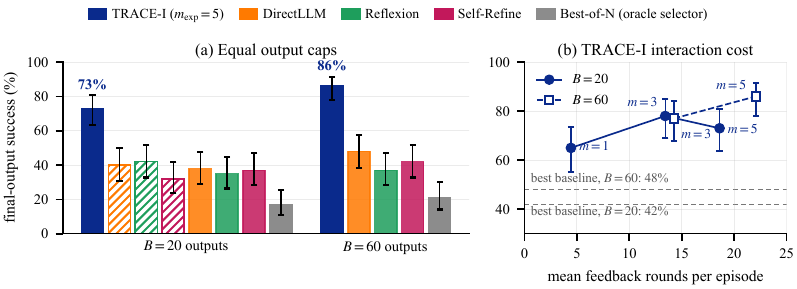}
\caption{RecMovie under equal output budgets ($100$ seeds; final-output success with Wilson $95\%$ intervals). (a) Common caps of $B=20$ and $B=60$ environment-evaluated outputs. Hatched bars are the Table~\ref{tab:main-comparison} baseline configurations, which were run only with $T=20$; solid baseline bars are the W8/NoCycle variants run with $60$ outputs. (b) \textsc{TRACE-I} success versus mean feedback rounds for test windows $m=m_{\mathrm{exp}}\in\{1,3,5\}$; dashed lines mark the strongest baseline at each cap.}
\label{fig:equal-budget-recmovie}
\end{figure}

\subsection{Identity feedback reduces blind search}
\label{sec:h-scaling}

Theorem~\ref{thm:identity} predicts that identity-aware proposal can replace blind dependence on the ambient candidate class size $H$ by the effective discovery cost $K/p_{\mathrm{ext}}$. We test this in a controlled \textsc{Mastermind} environment with candidate class $\mathcal{H}=\{(i,c): i\in[K],\, c\in\mathrm{palette}\}$ and $|\mathcal{H}|=K\cdot n_{\mathrm{colors}}$. Table~\ref{tab:h-scaling} fixes $K=3$ (with $N_{\max}=80$ enlarged from the headline $N_{\max}=15$ to give \textsc{TRACE-F} budget headroom at large $|\mathcal{H}|$) and varies $n_{\mathrm{colors}}\in\{3,6,10,15,20\}$, thereby increasing $H$ while keeping the number of latent constraints fixed; \emph{attempts} is the number of proposed children actually tested. Under the uniform binary proposer, success drops from $100\%$ at $|\mathcal{H}|=9$ to $74\%$ at $|\mathcal{H}|=60$, and mean attempts grow from $9.08$ to $50.82$. In contrast, the decoded identity proposer remains at $99$--$100\%$ success across the same sweep and uses substantially fewer attempts ($3.91$ to $18.08$). %
These results support the proposal-efficiency advantage of identification access as the candidate class grows. 

\begin{table}[h]
\centering
\small
\setlength{\tabcolsep}{3pt}
\caption{$H$-sweep on \textsc{Mastermind} ($K=3$, $m_{\mathrm{exp}}=15$, $N_{\max}=80$). Wilson $95\%$ intervals in Table~\ref{tab:app-h-ci}.}
\label{tab:h-scaling}
\begin{tabular}{rrrrrr}
\toprule
\textbf{$n_{\mathrm{colors}}$} & \textbf{$|\mathcal{H}|$} & \multicolumn{2}{c}{\textbf{\textsc{Falsification}}} & \multicolumn{2}{c}{\textbf{\textsc{Identification}}} \\
\cmidrule(lr){3-4} \cmidrule(lr){5-6}
& & \textbf{env.succ.} & \textbf{attempts} & \textbf{env.succ.} & \textbf{attempts} \\
\midrule
$3$  &  $9$ & $\mathbf{100\%}$ &  $\phantom{1}9.08$ & $\mathbf{100\%}$ & $\phantom{1}3.91$ \\
$6$  & $18$ & $\mathbf{100\%}$ & $18.75$ & $\mathbf{100\%}$ & $\phantom{1}4.84$ \\
$10$ & $30$ & $\phantom{1}98\%$ & $29.35$ & $\mathbf{100\%}$ & $\phantom{1}6.77$ \\
$15$ & $45$ & $\phantom{1}88\%$ & $45.38$ & $\mathbf{100\%}$ & $10.54$ \\
$20$ & $60$ & $\phantom{1}74\%$ & $50.82$ & $\phantom{1}99\%$ & $18.08$ \\
\bottomrule
\end{tabular}
\end{table}

\subsection{Why test-before-commit matters}
\label{sec:ablations-summary}

\begin{table}[b]
\centering
\small
\begin{minipage}[t]{0.44\linewidth}
\centering
\setlength{\tabcolsep}{4pt}
\caption{Direct-Commit baselines: best of $\{\text{AND}, \text{OR}, \text{RANKED}\}$ per environment; variants in Appendix~\ref{app:recmovie-full}.}
\label{tab:structure-necessity}
\begin{tabular}{lrrr}
\toprule
\textbf{Method} & \textbf{Rec.} & \textbf{\textsc{20Q}} & \textbf{\textsc{EDA}} \\
\midrule
\textbf{\textsc{TRACE-I}}      & $\mathbf{90\%}$ & $\mathbf{100\%}$ & $99\%$ \\
Direct-Commit                  & $23\%$ & $\mathbf{100\%}$ & $\mathbf{100\%}$ \\
\bottomrule
\end{tabular}
\end{minipage}\hfill
\begin{minipage}[t]{0.52\linewidth}
\centering
\setlength{\tabcolsep}{3pt}
\caption{False-identity robustness on RecMovie ($p_{\mathrm{false}}=0$ row matches Table~\ref{tab:main-comparison}).}
\label{tab:false-identity-main}
\begin{tabular}{lrr}
\toprule
\textbf{$p_{\mathrm{false}}$} & \textbf{\textsc{TRACE-I}} & \textbf{Direct-Commit} \\
\midrule
$0.0$ & $\mathbf{90\%}$ {\scriptsize [$83$,$94$]} & $23\%$ {\scriptsize [$16$,$32$]} \\
$0.2$ & $\mathbf{74\%}$ {\scriptsize [$65$,$82$]} & $16\%$ {\scriptsize [$10$,$24$]} \\
$0.5$ & $\mathbf{60\%}$ {\scriptsize [$50$,$69$]} & $\phantom{1}8\%$ {\scriptsize [$4$,$15$]} \\
\bottomrule
\end{tabular}
\end{minipage}
\end{table}

We examine when test-before-commit improves performance by comparing \textsc{TRACE-I} with direct accumulation using the same decoder, followed by controlled corruption of extracted identities.

\textbf{Same-decoder direct accumulation.} %
Table~\ref{tab:structure-necessity} compares \textsc{TRACE-I} with Direct-Commit baselines that immediately accumulate constraints from the same decoded feedback. Under the reported default configurations, \textsc{TRACE-I} reaches $90\%$ success on RecMovie, compared with $23\%$ for the strongest direct-commit variant. On \textsc{20Q} and \textsc{EDA Things}, where identity hints are essentially clean, both approaches reach $99$--$100\%$. These saturated cases leave little headroom for further gains from test-before-commit. We next examine the effect of incorrect identities through controlled corruption.

\textbf{False extracted identities.} In Table~\ref{tab:false-identity-main}, we inject false identities into RecMovie by replacing each decoded feedback predicate with a wrong same-dimension predicate with probability $p_{\mathrm{false}}$. \textsc{TRACE-I} uses corrupted identities only as proposals, whereas direct commit immediately accumulates them. As corruption increases, both methods degrade, but \textsc{TRACE-I} remains more robust: at $p_{\mathrm{false}}=0.5$, it still reaches $60\%$ env.success, while the best direct-commit variant reaches only $8\%$.  These results support testing extracted constraints before commitment when identity errors are possible.

\textbf{Additional diagnostics.} %
Component ablations on RecMovie distinguish proposal quality from detector reliability. With the rule-based detector fixed, an LLM-based atomic proposer increases interaction-budget any-success from $57\%$ under uniform proposals to $74\%$. Replacing the rule-based detector with an LLM detector instead reduces \textsc{TRACE-F}'s any-success from $57\%$ to $21\%$. Appendix~\ref{app:experimental-details} reports these ablations, soft-utility results, direct-commit variants, feedback-omission robustness, and a complementary \textsc{Mastermind} $K$-sweep; Appendix~\ref{app:trace-recmovie} provides an example execution trace.

\subsection{\textsc{TravelPlanner}: cross-LLM identification}
\label{sec:travelplanner}

\begin{table}[t]
\centering
\small
\setlength{\tabcolsep}{3pt}
\caption{\textsc{TRACE-I} on \textsc{TravelPlanner}, $n{=}50$. \emph{Identification}: committed $C_v\supseteq C^*$. \emph{Final Pass}: plan passes every env check.}
\label{tab:travelplanner}
\begin{tabular}{lrr}
\toprule
\textbf{LLM} & \textbf{Identification} & \textbf{Final Pass} \\
\midrule
Qwen3-4B    & $84\%$ {\scriptsize [$72$, $92$]} & $16\%$ {\scriptsize [$\phantom{0}8$, $29$]} \\
DeepSeek-V4-Flash & $\mathbf{86\%}$ {\scriptsize [$74$, $93$]} & $\mathbf{90\%}$ {\scriptsize [$79$, $96$]} \\
\bottomrule
\end{tabular}
\end{table}
We use a feedback-enabled adaptation of \textsc{TravelPlanner}'s sole-planning setting~\citep{xie2024travelplanner}, supplying reference information and evaluating complete candidate plans. User-stated requirements form the candidate constraint class $\mathcal{H}$, and per-check evaluation messages provide feedback for testing candidate refinements. We evaluate on the same $50$ validation queries with two LLM generators: Qwen3-4B~\citep{yang2025qwen3} served locally, and DeepSeek-V4-Flash~\citep{deepseekai2026deepseekv4}. We report the rate of committing all represented constraints, $C_v\supseteq C^*$, and final-plan success under this adapted protocol.
The two generator configurations yield similar rates of committing all represented constraints, $84\%$ and $86\%$, while their final-plan pass rates differ substantially, at $16\%$ and $90\%$. These results distinguish explicit constraint commitment from producing a plan that passes every evaluation check. Both metrics are needed to assess this adaptation.

\bibliography{mybib}
\bibliographystyle{iclr2027_conference}

\clearpage
\appendix

\section{Related Work}
\label{sec:related}

\paragraph{Learning from language feedback.}
The emerging language-feedback learning paradigm is tightly coupled with the rapid progress of language models. A first line of work treats natural-language feedback as a verbal reinforcement signal at inference time: \textsc{Reflexion}~\citep{shinn2023reflexion} maintains a self-reflective memory across trials, \textsc{Self-Refine}~\citep{madaan2023selfrefine} iteratively edits an output using its own critiques, and \textsc{TextGrad}~\citep{yuksekgonul2025optimizing} performs gradient-like updates over prompt tokens guided by language feedback. Tree-based agents such as \textsc{LATS}~\citep{zhou2024language} additionally search over reasoning steps, but rely on numeric value estimates rather than constraint signals. A complementary line studies whether large language models can themselves explore in interactive settings~\citep{krishnamurthy2024can}, and a separate body of work develops constraint-rich language tasks that motivate our setup, including \textsc{LLF-Bench}~\citep{cheng2023llf}, \textsc{COLLIE}~\citep{yao2024collie}, \textsc{ChinaTravel}~\citep{shao2026chinatravel}, and \textsc{TravelPlanner}~\citep{xie2024travelplanner}. Two recent works establish formal guarantees for language-feedback learning under different formalisms: \citet{misra2024provable} give $\sqrt{T}$-regret bounds under hindsight instruction feedback, and \textsc{HELiX}~\citep{xu2025helix} studies consistency-based elimination over a global hypothesis class. Our focus is orthogonal: we model a constraint-bearing regime where feedback provides local falsification and optional atomic identity evidence, and the algorithmic state is a partially committed constraint set rather than a full global hypothesis.

\paragraph{Pure exploration and identification problems.}
Our analysis belongs to the pure-exploration tradition in stochastic decision making, where the goal is to cover or identify a target object with high probability rather than to minimize cumulative reward. Best-arm identification under the fixed-budget setting was introduced by \citet{audibert2010best}, with subsequent work establishing matching upper and lower bounds via sequential halving and multi-sample concentration arguments~\citep{karnin2013almost,chen2017towards}. Among extensions, PAC top-$K$ best-arm identification~\citep{kalyanakrishnan2012pac,bubeck2013multiple} and feasible-arms identification~\citep{katz2018feasible,katz2019top} share with our setting the goal of recovering a set-like target from a finite candidate class. We extend this perspective to language-feedback learning along two axes: progress comes from gap-dependent \emph{falsification} of candidate constraint sets rather than reward concentration on individual arms, and identity-bearing feedback can propose atomic constraints that must still be tested before commitment. More broadly related are bandit problems over structured action spaces with numeric rewards~\citep{slivkins2019introduction,bubeck2011x,kleinberg2019bandits}.

\section{Complete Theoretical Analysis}
\label{app:complete-theory}

This appendix provides the complete theoretical analysis for constraint tree exploration. Section~\ref{app:theory-assumptions} restates the assumptions used by the proofs. Section~\ref{app:basic-lemmas} gives an exact-gap survival lemma and a safe-commitment lemma shared by both coverage proofs. Section~\ref{app:strict-regime} analyzes an idealized sound-proposer regime. Section~\ref{app:noisy-regime} restates Theorem~\ref{thm:main-noisy} in full as Theorem~\ref{app:thm:noisy} and proves it for \textsc{TRACE-F}. Section~\ref{app:identity-regime} restates Theorem~\ref{thm:identity} in full as Theorem~\ref{app:thm:identity} and proves it for \textsc{TRACE-I}. Section~\ref{app:ucb-bonus} records an anytime gap-UCB bound derived from the same survival argument.

\subsection{Assumptions}
\label{app:theory-assumptions}

We use the notation from Section~\ref{sec:formulation}. In particular, each tree node $v$ is associated with a constraint set $C_v$, feasible region $\mathcal{X}_v=\mathrm{Region}(C_v)$, node value $U(v):=\max_{x\in\mathcal{X}_v}U(x)$, and node gap $\Delta(v):=U^*-U(v)$. We write $\mathrm{Viol}(f;C_v)=1$ when the detector concludes that feedback $f$ contradicts at least one constraint in the tested set $C_v$; feedback about a missing but uncommitted constraint does not fire this predicate. The assumptions below also require a refutational feedback condition: if a tested committed set excludes all optima, feedback must contradict that set with positive probability. We write $\mathcal{H}$ for the finite candidate class used by the proposer. Throughout, we assume $K\ge 1$; if $K=0$, then $\mathrm{Region}(C^*)=\mathcal{X}$ and the root already satisfies the coverage target.

\paragraph{Oracle-level noise model.}
The raw message $f\sim P_x$ may be stochastic, but the analysis imposes reliability conditions on the oracle outputs: the firing probability of $\mathrm{Viol}$ and the extraction probability of $\mathrm{Extract}$. The theoretical model allows false negatives but rules out false positives. This one-sided noise model supports the multi-sample falsification argument, while Section~\ref{sec:ablations-summary} empirically evaluates robustness to spurious extracted identities.

\begin{assumption}[Refutational feedback and extraction]
\label{app:ass:feedback}
For any node $v$ for which $Q_{C_v}$ is defined, let
\[
q_v
:=
\Pr\!\left[
\mathrm{Viol}(f;C_v)=1
\,\middle|\,
x\sim Q_{C_v},\ f\sim P_x
\right].
\]
There exist constants $c_{\mathrm{fal}}>0$ and $\beta>0$ such that
\[
q_v=0
\quad\text{if}\quad
\Delta(v)=0,
\qquad\qquad
q_v\ge \phi(\Delta(v)):=\min\!\left\{1,\,c_{\mathrm{fal}}\Delta(v)^\beta\right\}
\quad\text{if}\quad
\Delta(v)>0.
\]
The equality $q_v=0$ is the no-false-alarm side of the detector contract. The lower bound for $\Delta(v)>0$ is an additional refutation-power assumption on the feedback distribution: a bad tested region must produce feedback that contradicts some committed constraint with probability controlled by its gap. Without this assumption, a user who only mentions missing uncommitted requirements could leave a bad committed constraint unfalsified. Without loss of generality, $c_{\mathrm{fal}}\le 1$: replacing $c_{\mathrm{fal}}$ by $\min\{c_{\mathrm{fal}},1\}$ preserves this condition, and the normalization lets the rounding in $m_{\mathrm{exp}}$ be absorbed into the $\mathcal{O}(\cdot)$ budget bounds below. We call this condition on $q_v$ the \emph{refutational-feedback condition}. Every result in this appendix except Theorem~\ref{app:thm:identity} uses only this condition; the extraction clauses (a)--(b) below are needed only in the identification setting, and \textsc{TRACE-F} uses a null extractor. In the identification setting, the extractor $\mathrm{Extract}:\mathcal{F}\to \mathcal{H}\cup\{\emptyset\}$ satisfies the following two conditions.
\begin{enumerate}[label=(\alph*)]
\item \textbf{Extraction reliability.} Conditional on the event $\mathrm{Viol}(f_t;C_v)=1$ for the constraint set currently being tested, the extractor returns a violated canonical constraint with probability at least $p_{\mathrm{ext}}>0$, independently across rounds. It returns $\emptyset$ otherwise.

\item \textbf{Soundness.} If $\mathrm{Extract}(f_t)=c\neq\emptyset$, then $c\in C^*$ and $\mathrm{Sat}(x_t,c)=0$. In particular, extraction never returns a spurious constraint.
\end{enumerate}
\end{assumption}

\begin{assumption}[Constrained generation]
\label{app:ass:generation}
For every constraint set $C$ considered by the algorithm with nonempty feasible region, there exists a distribution $Q_C$ supported on $\mathrm{Region}(C)=\{x\in\mathcal{X}:\mathrm{Sat}(x,c)=1\ \forall c\in C\}$. Each call to $Q_C$ returns an independent sample.
\end{assumption}

\begin{assumption}[Sound proposer]
\label{app:ass:sound-proposer}
The proposer $\mathrm{Propose}$ is a possibly randomized map from nodes to $\mathcal{H}\cup\{\emptyset\}$ satisfying:
\begin{enumerate}[label=(\alph*)]
\item \textbf{Completeness.} If $C_v\subsetneq C^*$, then $\mathrm{Propose}(v)\in C^*\setminus C_v$.

\item \textbf{Termination marker.} If $C_v=C^*$, then $\mathrm{Propose}(v)=\emptyset$.
\end{enumerate}
\end{assumption}

Assumption~\ref{app:ass:sound-proposer} is an idealization. It is useful as a warm-up because it isolates the role of tree depth and within-node spread. The main robust analysis below replaces it with a noisy proposer.

\begin{assumption}[Candidate class and noisy proposer]
\label{app:ass:noisy-proposer}
The finite canonical constraint class $\mathcal{H}$ from Section~\ref{sec:setup} satisfies $C^*\subseteq\mathcal{H}$ and $|\mathcal{H}|=H\ge K$. For every $C\subseteq\mathcal{H}$ with nonempty feasible region, the generator $Q_C$ is defined as in Assumption~\ref{app:ass:generation}. At each proposal call from node $v$, the proposer returns an element of $\mathcal{H}_v=\mathcal{H}\setminus(C_v\cup R_v)$ sampled uniformly at random, independently of the past given $(v,R_v)$, where $R_v$ is the per-node lazy filter of Algorithm~\ref{alg:robust}. If $\mathcal{H}_v=\emptyset$, it returns $\emptyset$.
\end{assumption}

\begin{remark}[Per-node lazy filter]
\label{app:rem:lazy-filter}
The set $R_v$ in Algorithm~\ref{alg:robust} records the candidates that gave an empty region or were rejected at $v$; it starts empty at each node, and proposals are drawn from $\mathcal{H}_v=\mathcal{H}\setminus(C_v\cup R_v)$. At a viable node, a missing true constraint never enters $R_v$: its child contains a global optimum, so the region is nonempty and, by Assumption~\ref{app:ass:feedback}, the child is never falsified (a test cut short by budget exhaustion also adds its candidate to $R_v$, but ends the run). Hence missing true constraints remain available at viable nodes, which is what the proofs of Theorems~\ref{app:thm:noisy} and~\ref{app:thm:identity} use. Since each proposal at $v$ either enters $R_v$ or is committed, at most $|\mathcal{H}\setminus C_v|+1$ proposal calls occur at each node, so Algorithm~\ref{alg:robust} terminates.
\end{remark}

\begin{assumption}[$\varepsilon$-separated spurious children]
\label{app:ass:eps-sep}
Fix $\varepsilon\in(0,1]$. For every viable node $v$ (i.e., $\Delta(v)=0$) and every $c\in\mathcal{H}\setminus C_v$ whose child $u=v\cup\{c\}$ has a nonempty feasible region $\mathcal{X}_u$, this child satisfies either $\Delta(u)=0$ (it preserves a global optimum) or $\Delta(u)>\varepsilon$ (it excludes every optimum; we call such a child $\varepsilon$-spurious).
\end{assumption}

Assumption~\ref{app:ass:eps-sep} is a structural condition on $(\mathcal{X}, U, \mathcal{H}, C^*)$, not on the generator $Q_C$, detector $\mathrm{Viol}$, extractor $\mathrm{Extract}$, or proposer $\mathrm{Propose}$. It plays the role of a sub-optimality gap: without it, children with arbitrarily small but positive $\Delta(u)$ would require unboundedly many samples to falsify. The gap $\Delta(u)$ is defined only when $\mathcal{X}_u\neq\emptyset$. Children with empty regions are never tested by Algorithm~\ref{alg:robust}, so Assumptions~\ref{app:ass:eps-sep} and~\ref{app:ass:identity-progress} impose no condition on them.

\begin{assumption}[No zero-gap distractors]
\label{app:ass:identity-progress}
For the identification-setting analysis, every $c\in\mathcal{H}\setminus C^*$ with $\mathrm{Region}(c)\neq\emptyset$ satisfies $\Delta(\{c\})>\varepsilon$; that is, imposing $c$ alone already excludes every global optimum with margin $\varepsilon$.
\end{assumption}

Assumption~\ref{app:ass:identity-progress} is the structural condition used by Theorem~\ref{app:thm:identity}. It extends to every viable node $v$: a true constraint gives a child that keeps the optimum of $v$, while for $c\notin C^*$ with a nonempty child, $\mathrm{Region}(C_v\cup\{c\})\subseteq\mathrm{Region}(c)$ gives $\Delta(v\cup\{c\})\ge\Delta(\{c\})>\varepsilon$. Hence it implies Assumption~\ref{app:ass:eps-sep}, and, since $K\ge1$ makes the root such a node, it is equivalent to requiring the same property at every viable node that does not yet contain $C^*$. It depends only on $(\mathcal{X},U,\mathcal{H},C^*)$, and extractor reliability cannot ensure it; without it, an identification-aware proposer could spend budget committing harmless but non-informative constraints, reintroducing dependence on their number.

The extraction progress used by Theorem~\ref{app:thm:identity} follows from Assumption~\ref{app:ass:feedback}(a)--(b). Suppose a test of a child $u$ of a viable node $v$ starts with working set $W_{\mathrm{start}}\subseteq C_v$ and ends with a violation. Conditional on the history and on $u$, the rounds of the test are independent, so the feedback at the first violation has the one-round law conditional on a violation, and by (a) the extractor then returns a constraint with probability at least $p_{\mathrm{ext}}$. By (b) this constraint lies in $C^*$ and is violated by the sampled action, which satisfies $C_u$; hence it lies in $C^*\setminus C_u\subseteq C^*\setminus(C_v\cup W_{\mathrm{start}})$, so $|C_v\cup W|$ has increased during the test, at this round or at an earlier round that returned the same constraint. No independence between different tests is needed.

The same conditions restrict the instances covered by Theorem~\ref{app:thm:identity}: together with Assumption~\ref{app:ass:feedback}, Assumption~\ref{app:ass:identity-progress} forces $\mathrm{Region}(c)\cap\mathrm{Region}(C^*)=\emptyset$ for every $c\in\mathcal{H}\setminus C^*$. Otherwise the node $D=C^*\cup\{c\}$ is nonempty with $\Delta(D)\ge\Delta(\{c\})>\varepsilon$, so the refutational-feedback condition gives $q_D>0$; but every action in $\mathrm{Region}(D)$ satisfies all true constraints, so by (b) no extraction can succeed at $D$, contradicting (a). The identification guarantee therefore applies when every incorrect candidate is incompatible with the true requirements, as for wrong values of single-valued attributes; Theorem~\ref{app:thm:noisy} does not use (a)--(b) and is unaffected.

\subsection{Algorithmic remarks}
\label{app:algorithmic-remarks}

The remarks in this subsection record operational properties of Algorithm~\ref{alg:robust} that are referenced from the main text. They are not used in any proof.

\begin{remark}[Working-set persistence and bounded growth]
\label{app:rem:W-persistence}
The working set $W$ in Algorithm~\ref{alg:robust} is initialized once and updated only by union; it is never cleared. Four observations characterize its behavior. (a) $|W|\le H$ since $W\subseteq\mathcal{H}$. (b) The identification proposer reads $W\cap\mathcal{H}_v$ rather than $W$ directly, so already-committed and already-excluded constraints are filtered at proposal time. (c) Under sound extraction (Assumption~\ref{app:ass:feedback}(b)), every $c\in W$ is a true latent constraint; once proposed at a viable node $v$, the resulting child has $\Delta=0$, is never falsified by the no-false-alarm clause of Assumption~\ref{app:ass:feedback}, and is committed at the end of its $m_{\mathrm{exp}}$-round test window whenever the remaining budget covers that window. (d) Under empirically unsound extraction, false entries in $W$ may be repeatedly proposed. Proposing such an entry at a viable node yields an empty child, which is excluded without a test, a zero-gap child, which may be committed but still contains a global optimum, or, by Assumption~\ref{app:ass:eps-sep}, a child with $\Delta>\varepsilon$. Lemma~\ref{app:lem:safety} allows any history-dependent proposer, so with $m_{\mathrm{exp}}$ as in Theorems~\ref{app:thm:noisy} and~\ref{app:thm:identity} and a detector that still satisfies the refutational-feedback condition, no child with positive gap is committed with probability at least $1-\delta/2$; false entries then cost test budget while the committed node remains viable. This controls harmful commitments only; the $K/p_{\mathrm{ext}}$ guarantee of Theorem~\ref{app:thm:identity} additionally requires sound extraction and Assumption~\ref{app:ass:identity-progress}. This is the algorithmic backing for the false-identity robustness check in Section~\ref{sec:ablations-summary}. Combined with the per-node lazy filter (Remark~\ref{app:rem:lazy-filter}), \textsc{TRACE-I} cannot loop on a spurious extracted identity once it has been falsified at $v$: the lazy filter excludes it from subsequent proposals at the same node.
\end{remark}

\begin{remark}[Detector contract, operational reading]
\label{app:rem:detector-contract}
Per Assumption~\ref{app:ass:feedback}, $\mathrm{Viol}$ is used only for falsification: feedback that reveals a missing requirement may update $W$ via $\mathrm{Extract}$, but does not reject the tested child unless it contradicts a constraint already in $C_u$. Equivalently, the detector fires on contradictions with the tested set, never on discovery feedback alone. As an illustration, if $C_u=\{\mathrm{genre}=\mathrm{comedy}\}$, then ``I did not want a comedy'' contradicts $C_u$ and fires $\mathrm{Viol}$, whereas ``I also wanted something from the 1990s'' is discovery feedback and must not fire $\mathrm{Viol}$.
\end{remark}

\begin{remark}[Counters in Algorithm~\ref{alg:robust}]
\label{app:rem:counters}
The variable $t$ is the global interaction-round counter running against the budget $T$; the inner-loop index $s$ counts tests of the current child running against the per-child cap $m_{\mathrm{exp}}$. The two counters are independent: $s$ resets on each new child while $t$ is monotone over the entire run. Both caps are needed because the algorithm enforces a global budget $T$ via $t$ and a per-child test window $m_{\mathrm{exp}}$ via $s$.
\end{remark}

\subsection{Basic lemmas}
\label{app:basic-lemmas}

\begin{lemma}[Multi-sample falsification, exact-gap form; implies Lemma~\ref{lem:multi-sample}]
\label{app:lem:multi-sample}
Let $v$ be a node for which $Q_{C_v}$ is defined, and let $x_s\sim Q_{C_v}$ and $f_s\sim P_{x_s}$ for $m$ independent rounds. Then
\[
\Pr\!\left[\mathrm{Viol}(f_s;C_v)=0\ \text{for all }s=1,\ldots,m\right]=(1-q_v)^m\le\exp\!\left(-m c_{\mathrm{fal}}\Delta(v)^\beta\right).
\]
In particular, a node with $\Delta(v)>\varepsilon$ survives $m$ tests with probability at most $\exp(-m c_{\mathrm{fal}}\varepsilon^\beta)$, which is at most $\delta'$ when $m\ge\log(1/\delta')/(c_{\mathrm{fal}}\varepsilon^\beta)$.
\end{lemma}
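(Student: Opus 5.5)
The argument is a direct independence computation followed by two elementary inequalities. First I will establish the equality. The $m$ rounds are independent: each draws $x_s\sim Q_{C_v}$ freshly (Assumption~\ref{app:ass:generation}), and $f_s\sim P_{x_s}$ depends only on $x_s$. Hence the indicator events $E_s:=\{\mathrm{Viol}(f_s;C_v)=0\}$ are i.i.d. across $s$. By the definition of $q_v$ in Assumption~\ref{app:ass:feedback}, each has probability $1-q_v$, since $q_v$ is exactly the one-round firing probability under $x\sim Q_{C_v}$, $f\sim P_x$. The probability that all $m$ events occur therefore factorizes as $(1-q_v)^m$.

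For the inequality I will use $1-q\le e^{-q}$ for $q\in[0,1]$, which gives $(1-q_v)^m\le e^{-mq_v}$. I will then split on the gap.

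\emph{Case $\Delta(v)=0$.} Here $q_v=0$ and the right-hand side $\exp(-mc_{\mathrm{fal}}\cdot 0)=1$, so the bound holds trivially.

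\emph{Case $\Delta(v)>0$.} Assumption~\ref{app:ass:feedback} gives $q_v\ge\min\{1,c_{\mathrm{fal}}\Delta(v)^\beta\}$. If the minimum is $c_{\mathrm{fal}}\Delta(v)^\beta$, then $e^{-mq_v}\le e^{-mc_{\mathrm{fal}}\Delta(v)^\beta}$ directly. If the minimum is $1$, then $q_v=1$ and $(1-q_v)^m=0$, so the bound holds; this is the case where the WLOG normalization $c_{\mathrm{fal}}\le1$ is unnecessary. The only subtlety in the whole proof is this truncation in $\phi$ (and the degenerate $\Delta(v)=0$ case), which I handle by the explicit case split rather than by relying on the normalization.

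For the consequences, since $\beta>0$ the map $\Delta\mapsto\Delta^\beta$ is increasing. So $\Delta(v)>\varepsilon$ gives $\exp(-mc_{\mathrm{fal}}\Delta(v)^\beta)\le\exp(-mc_{\mathrm{fal}}\varepsilon^\beta)$. Finally, $m\ge\log(1/\delta')/(c_{\mathrm{fal}}\varepsilon^\beta)$ rearranges to $-mc_{\mathrm{fal}}\varepsilon^\beta\le\log\delta'$, which yields survival probability at most $\delta'$. This also recovers Lemma~\ref{lem:multi-sample}.
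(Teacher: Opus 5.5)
Your proof is correct and follows the paper's argument. Independence of the rounds gives $(1-q_v)^m$, and $1-y\le e^{-y}$ combined with the refutational-feedback lower bound on $q_v$ gives the exponential bound. The only difference is minor: the paper removes the truncation in $\phi$ using the normalization $c_{\mathrm{fal}}\le 1$ together with $\Delta(v)\le 1$, which also covers the $\Delta(v)=0$ case in one line, whereas your explicit case split shows the bound holds without that normalization.
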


\begin{proof}
Since $c_{\mathrm{fal}}\le1$ and $\Delta(v)\le1$, we have $c_{\mathrm{fal}}\Delta(v)^\beta\le1$, so Assumption~\ref{app:ass:feedback} gives $q_v\ge c_{\mathrm{fal}}\Delta(v)^\beta$, also when $\Delta(v)=0$. The rounds are independent by Assumption~\ref{app:ass:generation}, and $1-y\le e^{-y}$ gives the bound.
\end{proof}

\begin{lemma}[Safe commitments]
\label{app:lem:safety}
Suppose the refutational-feedback condition of Assumption~\ref{app:ass:feedback} and Assumptions~\ref{app:ass:generation} and~\ref{app:ass:eps-sep} hold. Run Algorithm~\ref{alg:robust} with any history-dependent proposer, test length $m$, at most $N$ tests, and $T=Nm$. Then each test has a complete window of $m$ rounds, and the probability that some child with $\Delta>0$ is committed is at most $N\exp(-m c_{\mathrm{fal}}\varepsilon^\beta)$.
\end{lemma}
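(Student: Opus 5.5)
The plan has three parts: check that every test gets its full window, show that the committed node stays viable until the first harmful commitment, and then apply a union bound over tests in which each term is controlled by Lemma~\ref{app:lem:multi-sample}.

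\emph{Complete windows.} The while loop runs at most $N$ tests, since $n$ is incremented only when a child is actually tested; empty-region children consume no budget. Each test consumes at most $m$ rounds, so before the $j$-th test begins, $j\le N$, at most $(j-1)m$ rounds have been used. That leaves at least $m$ rounds when $T=Nm$. Hence the branch ``$t\ge T$'' inside the for loop never fires, and every test either stops at a genuine violation or runs all $m$ rounds.

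\emph{Induction on viability.} Define the bad event $B_j$: the $j$-th test is of a child $u$ with $\Delta(u)>0$, and $u$ is committed. Consider the committed node before the first bad commitment. It starts at the root, which is viable because $\mathcal{X}^*\neq\emptyset$. It can only change by committing a tested child, so before the first $B_j$ it is always viable. At a viable parent $v$, Assumption~\ref{app:ass:eps-sep} says that every tested (nonempty) child has either $\Delta(u)=0$ or $\Delta(u)>\varepsilon$. So on the event that no earlier bad commitment occurred, a child with $\Delta(u)>0$ at test $j$ in fact has $\Delta(u)>\varepsilon$. The event ``some child with $\Delta>0$ is committed'' is then contained in $\bigcup_{j\le N} B'_j$, where $B'_j$ means that the $j$-th test is of a child with $\Delta(u)>\varepsilon$ and that child survives all $m$ rounds.

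\emph{Bounding each term.} Condition on the history $\mathcal{G}_{j}$ up to the start of test $j$. This history fixes $u$, even though the proposer is history-dependent and possibly randomized. Given $u$, the $m$ samples $x_s\sim Q_{C_u}$ are fresh independent draws by Assumption~\ref{app:ass:generation}. Each feedback $f_s$ depends on the past only through $x_s$. The rounds are therefore i.i.d. with per-round violation probability $q_u$. On $\{\Delta(u)>\varepsilon\}$, Lemma~\ref{app:lem:multi-sample} gives $\Pr[B'_j\mid\mathcal{G}_j]\le\exp(-mc_{\mathrm{fal}}\varepsilon^\beta)$. Taking expectations and summing over $j\le N$ gives the bound $N\exp(-mc_{\mathrm{fal}}\varepsilon^\beta)$.

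\emph{Main obstacle.} The delicate step is this conditioning. The proposer and the $\mathrm{Extract}$ updates to $W$ depend adaptively on earlier feedback, and the number of tests is random. I would handle both by indexing potential tests $j=1,\dots,N$: if fewer than $j$ tests occur, $B'_j$ is empty. Then the per-test bound holds conditionally on $\mathcal{G}_j$, which needs no independence across tests.
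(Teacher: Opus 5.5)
Your proposal is correct and follows the paper's proof essentially step for step. It bounds elapsed rounds by $(j-1)m$ to get complete windows, keeps the committed node viable until the first harmful commitment so that Assumption~\ref{app:ass:eps-sep} upgrades $\Delta>0$ to $\Delta>\varepsilon$, applies Lemma~\ref{app:lem:multi-sample} conditionally on the history and proposed child, and takes a union bound over the at most $N$ tests; your explicit indexing of potential tests is just a more careful rendering of that same union bound.
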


\begin{proof}
The $n$-th test starts after at most $(n-1)m$ rounds, so it has $m$ rounds available, and a child is committed only after $m$ violation-free rounds. Before the first commitment of a child with $\Delta>0$, every committed node is viable, so by Assumption~\ref{app:ass:eps-sep} the child tested at that moment has $\Delta>\varepsilon$. Conditional on the history and the proposed child, the test rounds are fresh independent samples, so Lemma~\ref{app:lem:multi-sample} bounds its acceptance probability by $\exp(-m c_{\mathrm{fal}}\varepsilon^\beta)$. A union bound over the at most $N$ tests at which this first commitment can occur gives the claim.
\end{proof}

\subsection{Idealized sound-proposer regime}
\label{app:strict-regime}

We first analyze an idealized regime in which proposed constraints are always true. This result is mainly a warm-up: it shows that once proposals are sound, constraint-tree depth alone is enough to reach the latent feasible region.

\begin{theorem}[Coverage under a sound proposer]
\label{app:thm:strict}
Suppose the refutational-feedback condition of Assumption~\ref{app:ass:feedback} holds, together with Assumptions~\ref{app:ass:generation} and~\ref{app:ass:sound-proposer}. Consider the idealized variant of \textsc{TRACE} that tests the current node for $m_{\mathrm{exp}}\ge 1$ rounds and then calls $\mathrm{Propose}$ whenever the node survives those rounds without falsification. If the budget satisfies $T\ge K m_{\mathrm{exp}}$, then with probability one the procedure reaches a node $v$ with $C_{v}=C^*$. Consequently, if $\hat{x}\sim Q_{C_{v}}$, then $\hat{x}\in\mathrm{Region}(C^*)$ and $U(\hat{x})\ge U^*-\mathrm{Spread}(C^*)$.
\end{theorem}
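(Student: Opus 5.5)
The argument is an induction on the depth of the committed node. The invariant is that, at the start of the $h$-th test phase ($h=0,1,\ldots$), the current node $v$ satisfies $C_v\subseteq C^*$ with $|C_v|=h$. The root satisfies this trivially, since $C_{v_0}=\emptyset$.

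First, every node with $C_v\subseteq C^*$ is viable. Indeed, $\mathrm{Region}(C_v)\supseteq\mathrm{Region}(C^*)\supseteq\mathcal{X}^*\neq\emptyset$, so $\mathcal{X}_v$ contains a global optimum and $\Delta(v)=0$. By the no-false-alarm clause of Assumption~\ref{app:ass:feedback}, $q_v=0$. Because $Q_{C_v}$ is defined and supported on $\mathcal{X}_v$ (Assumption~\ref{app:ass:generation}), every one of the $m_{\mathrm{exp}}$ test rounds at $v$ returns $\mathrm{Viol}(f;C_v)=0$ almost surely. Hence $v$ survives its window with probability one.

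Next, apply Assumption~\ref{app:ass:sound-proposer}. If $C_v\subsetneq C^*$, completeness gives $\mathrm{Propose}(v)=c\in C^*\setminus C_v$, so the child $u$ has $C_u=C_v\cup\{c\}\subseteq C^*$ and $|C_u|=h+1$. The invariant is therefore preserved, and $\mathcal{X}_u\supseteq\mathrm{Region}(C^*)$ is nonempty, so $Q_{C_u}$ is defined. Since $|C^*|=K$, strict inclusion can hold for at most $K$ phases. After $K$ successful phases we have $C_v\subseteq C^*$ and $|C_v|=K$, hence $C_v=C^*$, and the termination marker returns $\emptyset$.

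The one place that needs care is the budget accounting. The idealized variant tests the node and then proposes, so reaching $C^*$ requires $K$ complete windows, at nodes of depth $0,\ldots,K-1$, totalling $Km_{\mathrm{exp}}\le T$ rounds. The node with $C_v=C^*$ is reached at the final call to $\mathrm{Propose}$, so its own window does not need to fit in the budget; the proof should say so explicitly. Because no step uses a probabilistic bound, all the almost-sure events above intersect to an event of probability one.

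For the consequence, $\hat{x}\sim Q_{C^*}$ lies in $\mathrm{Region}(C^*)$. Therefore
\[
U(\hat{x})\ge\inf_{x\in\mathrm{Region}(C^*)}U(x)=\sup_{x\in\mathrm{Region}(C^*)}U(x)-\mathrm{Spread}(C^*)\ge U^*-\mathrm{Spread}(C^*),
\]
where the last step uses $x^*\in\mathrm{Region}(C^*)$, so the supremum equals $U^*$.
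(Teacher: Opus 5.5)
Your proof is correct and follows essentially the same route as the paper. Both argue by induction along the committed path with the invariant $C_v\subseteq C^*$: viability gives $q_v=0$ and hence almost-sure survival, completeness of the sound proposer extends the path by one true constraint per window, and $K$ windows of $m_{\mathrm{exp}}$ rounds fit in $T\ge Km_{\mathrm{exp}}$. Your extra checks make explicit what the paper leaves implicit: that each child region is nonempty (so $Q_{C_u}$ is defined), and that the window at the final node $C_v=C^*$ need not fit in the budget.
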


\begin{proof}
We argue by induction over the committed path. Initially, the root satisfies $C_{v_0}=\emptyset\subseteq C^*$. Suppose the current node $v$ satisfies $C_v\subseteq C^*$. Since $x^*\in\mathrm{Region}(C_v)$, we have $U(v)=U^*$ and therefore $\Delta(v)=0$. By the no-false-alarm part of Assumption~\ref{app:ass:feedback}, the node is never falsified: $q_v=0$. Thus the node survives any test window of length $m_{\mathrm{exp}}$.

If $C_v\subsetneq C^*$, then by Assumption~\ref{app:ass:sound-proposer} the proposer returns a constraint $c\in C^*\setminus C_v$. The algorithm moves to the child $u$ with $C_u=C_v\cup\{c\}$. This child also satisfies $C_u\subseteq C^*$ and has depth one larger than $v$. Therefore, each successful expand step strictly increases $|C_v|$ while preserving the invariant $C_v\subseteq C^*$.

After at most $K$ such expansions, the committed node satisfies $C_v=C^*$. Each expansion consumes exactly $m_{\mathrm{exp}}$ test rounds, so the budget $T\ge K m_{\mathrm{exp}}$ suffices. Finally, by the support property of $Q_{C_v}$, $\hat{x}\sim Q_{C_v}$ implies $\hat{x}\in\mathrm{Region}(C^*)$. By the definition of $\mathrm{Spread}(C^*)$, $U(\hat{x})\ge U^*-\mathrm{Spread}(C^*)$.
\end{proof}

\subsection{Noisy-proposer regime}
\label{app:noisy-regime}

We now analyze the robust test-before-commit algorithm under a noisy proposer. This is the main theoretical setting used in the paper.

\begin{theorem}[Coverage in the falsification setting; full version of Theorem~\ref{thm:main-noisy}]
\label{app:thm:noisy}
Suppose the refutational-feedback condition of Assumption~\ref{app:ass:feedback} holds (its extraction clauses (a)--(b) are not needed), together with Assumptions~\ref{app:ass:generation}, \ref{app:ass:noisy-proposer}, and \ref{app:ass:eps-sep}. Fix $\delta\in(0,1)$ and $\varepsilon\in(0,1]$. If
\[
N_{\max}
:=
\left\lceil
H\log\frac{2\max(K,1)}{\delta}
\right\rceil,
\qquad
m_{\mathrm{exp}}
:=
\left\lceil
\frac{\log(2N_{\max}/\delta)}
{c_{\mathrm{fal}}\varepsilon^\beta}
\right\rceil,
\qquad
T:=N_{\max}m_{\mathrm{exp}},
\]
then with probability at least $1-\delta$ the deepest committed node $v$ satisfies $\mathrm{Region}(C_{v})\subseteq\mathrm{Region}(C^*)$. In the canonical finite-class proof, this region inclusion is obtained through the sufficient event $C_{v}\supseteq C^*$; the committed set may also include additional zero-gap constraints from $\mathcal{H}\setminus C^*$. Hence, if $\hat{x}\sim Q_{C_{v}}$, then $\hat{x}\in\mathrm{Region}(C^*)$ and $U(\hat{x})\ge U^*-\mathrm{Spread}(C^*)$. With $L_F:=\log(2K/\delta)$, the total interaction budget is
\[
T
=
\mathcal{O}\!\left(
\frac{
HL_F\log(4HL_F/\delta)
}{
c_{\mathrm{fal}}\varepsilon^\beta
}
\right).
\]
\end{theorem}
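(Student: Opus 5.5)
The plan is to split the failure probability into two halves of $\delta/2$: a \emph{safety} event, on which no child with $\Delta>0$ is ever committed, and a \emph{discovery} event, on which every latent constraint is proposed (and hence committed) within $N_{\max}$ tests.

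\textbf{Safety.} Lemma~\ref{app:lem:safety} applies to any history-dependent proposer with $N=N_{\max}$, $m=m_{\mathrm{exp}}$ and $T=N_{\max}m_{\mathrm{exp}}$. It gives failure probability at most $N_{\max}\exp(-m_{\mathrm{exp}}c_{\mathrm{fal}}\varepsilon^\beta)\le\delta/2$ by the choice of $m_{\mathrm{exp}}$. The lemma also ensures every test has a full window of $m_{\mathrm{exp}}$ rounds. So on this event every committed node stays viable. By Remark~\ref{app:rem:lazy-filter}, a missing true constraint $c\in C^*\setminus C_v$ then never enters $R_v$. When proposed, it is either already in $C_v$ or gives a nonempty child with $\Delta=0$, hence $q=0$, so it survives and is committed. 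Committed constraints persist along the path, because moving to a child only adds constraints. Thus $C_v\supseteq C^*$ holds once each $c_k^*$ has been proposed at some point.

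\textbf{Discovery.} Fix $c_k^*$. At each proposal call from a viable node $v$ with $c_k^*\notin C_v$, the proposer draws uniformly from $\mathcal{H}_v$, which contains $c_k^*$ and has size at most $H$. So $c_k^*$ is chosen with conditional probability at least $1/H$, independently of the past given $(v,R_v)$.

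\begin{itemize}
\item Proposals producing empty children consume no budget. I would handle this by counting proposal events, not tests, and noting that empty-child proposals only shrink $\mathcal{H}_v$, which raises later probabilities. Alternatively, they can be absorbed, since at most $H$ empty proposals occur per node and they cost nothing in $T$ or $n$. The cleanest route is to compare only against counted tests: conditioned on a call producing a test, its candidate is $c_k^*$ with probability at least $1/H$, because $c_k^*$ always yields a nonempty child.
\item A sequential-domination argument then shows that the probability $c_k^*$ is never proposed in $N_{\max}$ tests is at most $(1-1/H)^{N_{\max}}\le\exp(-N_{\max}/H)\le\delta/(2K)$.
\item A union bound over the $K$ constraints gives $\delta/2$.
\end{itemize}

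There are also premature-termination cases. If $\mathcal{H}_v=\emptyset$ while some true constraint is missing, this contradicts the fact that missing true constraints stay in $\mathcal{H}_v$. If the loop stops after fewer than $N_{\max}$ tests, that can only be because $\mathcal{H}_v$ became empty. Then $C^*\subseteq C_v$ already holds, since otherwise the missing true constraint would still be in $\mathcal{H}_v$.

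\textbf{Conclusion and budget.} On the intersection of the two events, which has probability at least $1-\delta$, we get $C_v\supseteq C^*$. Hence $\mathrm{Region}(C_v)\subseteq\mathrm{Region}(C^*)$, and the regret statement follows from the support of $Q_{C_v}$ and the definition of $\mathrm{Spread}$. For the budget, $N_{\max}\le HL_F+1\le 2HL_F$, so $\log(2N_{\max}/\delta)\le\log(4HL_F/\delta)$. With $c_{\mathrm{fal}}\le1$ the ceiling can be absorbed, giving $T=\mathcal{O}(HL_F\log(4HL_F/\delta)/(c_{\mathrm{fal}}\varepsilon^\beta))$.

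\textbf{Main obstacle.} The hard step is the discovery argument. Success probabilities are coupled across $k$ and across node changes, and the number of tests is random, while the draws are only conditionally uniform. I would formalize it with a per-test indicator and a filtration: at every counted test before $c_k^*$ is committed, the conditional probability of proposing $c_k^*$ is at least $1/H$. The tail bound $(1-1/H)^{N}$ then follows by iterated conditioning.
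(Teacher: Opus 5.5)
Your proposal is correct and follows the paper's proof: the safety half comes from Lemma~\ref{app:lem:safety} at cost $\delta/2$, each latent constraint gets a geometric bound $(1-1/H)^{N_{\max}}\le e^{-N_{\max}/H}\le\delta/(2K)$ followed by a union bound over $K$, and a viable run that still misses a true constraint cannot stop before $N_{\max}$ tests. The differences are minor. The paper indexes all proposal calls, including empty-region ones, so each call proposes $c$ with probability exactly $1/|\mathcal{H}_v|\ge 1/H$; you index tests and condition on the call producing a test, which is also valid. Also, the paper builds \emph{no child with $\Delta>0$ committed yet} into its decreasing events $A_{j,c}$. Your per-test bound needs that same qualifier rather than holding unconditionally, because after a bad commitment $c_k^*$ may give an empty child.
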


\begin{proof}
Write $N:=N_{\max}$ and $m:=m_{\mathrm{exp}}$, and let $\mathcal{B}$ be the event that a child with $\Delta>0$ is committed. By Lemma~\ref{app:lem:safety} and the choice of $m$, $\Pr(\mathcal{B})\le N\exp(-m c_{\mathrm{fal}}\varepsilon^\beta)\le\delta/2$.

Index the proposal calls, including those whose candidate gives an empty region, and let $\mathcal{F}_j$ be the history after $j$ calls and their tests. Fix $c\in C^*$, and let $A_{j,c}$ be the event that no child with $\Delta>0$ has been committed during the first $j$ calls and $c$ is still uncommitted after them. On $A_{j,c}$ the committed node $v$ is viable, so its region contains a global optimum, which satisfies $c$; hence the child $v\cup\{c\}$ is nonempty with $\Delta=0$, and $c\in\mathcal{H}_v$ by Remark~\ref{app:rem:lazy-filter}. For $j<N$, at most $j$ tests and $jm$ rounds have been used, so the next call takes place and any test it triggers has a complete window. This call proposes $c$ with conditional probability $1/|\mathcal{H}_v|\ge1/H$, and the resulting child then passes and is committed. Hence
\[
\Pr(A_{j+1,c}\mid\mathcal{F}_j)\le\Bigl(1-\frac{1}{H}\Bigr)\mathbf{1}_{A_{j,c}},
\qquad\text{so}\qquad
\Pr(A_{N,c})\le\Bigl(1-\frac{1}{H}\Bigr)^{N}\le e^{-N/H}.
\]
If $\mathcal{B}$ does not occur but some $c\in C^*$ is never committed, the run makes at least $N$ calls, since a viable run that misses a true constraint cannot stop earlier, and therefore $A_{N,c}$ occurs. By the choice of $N$,
\[
\Pr(C^*\nsubseteq C_v)\le\Pr(\mathcal{B})+\sum_{c\in C^*}\Pr(A_{N,c})\le\frac{\delta}{2}+Ke^{-N/H}\le\delta.
\]
On the complement, $C_v\supseteq C^*$, so $\mathrm{Region}(C_v)\subseteq\mathrm{Region}(C^*)$; the support of $Q_{C_v}$ gives $\hat{x}\in\mathrm{Region}(C^*)$, and the definition of $\mathrm{Spread}(C^*)$ gives $U(\hat{x})\ge U^*-\mathrm{Spread}(C^*)$. Each of the at most $N$ tests uses at most $m$ rounds, so $T=Nm$ suffices.
\end{proof}

\begin{corollary}[Expected simple regret]
\label{app:cor:expected-regret}
Under the setting of Theorem~\ref{app:thm:noisy}, the expected simple regret satisfies
\[
\mathbb{E}[r_T]
\le
\mathrm{Spread}(C^*)+\delta.
\]
\end{corollary}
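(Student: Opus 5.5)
The plan is to convert the high-probability coverage event of Theorem~\ref{app:thm:noisy} into an expectation bound by splitting on that event. Let $G$ be the event that the deepest committed node $v$ at termination satisfies $C_v\supseteq C^*$, hence $\mathrm{Region}(C_v)\subseteq\mathrm{Region}(C^*)$. Theorem~\ref{app:thm:noisy} gives $\Pr(G^c)\le\delta$ under the stated choices of $N_{\max}$, $m_{\mathrm{exp}}$ and $T$. Then I will write
\[
\mathbb{E}[r_T]=\mathbb{E}\bigl[(U^*-U(\hat{x}))\mathbf{1}_G\bigr]+\mathbb{E}\bigl[(U^*-U(\hat{x}))\mathbf{1}_{G^c}\bigr]
\]
and bound the two terms separately.

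For the first term, I condition on the history up to termination. On $G$, the output $\hat{x}\sim Q_{C_v}$ is drawn from a distribution supported on $\mathrm{Region}(C_v)\subseteq\mathrm{Region}(C^*)$ (Assumption~\ref{app:ass:generation}). Since $\mathcal{X}^*\subseteq\mathrm{Region}(C^*)$, we have $U^*=\sup_{x\in\mathrm{Region}(C^*)}U(x)$. Hence $U^*-U(\hat{x})\le\mathrm{Spread}(C^*)$ pointwise on $G$, and the first term is at most $\mathrm{Spread}(C^*)\Pr(G)\le\mathrm{Spread}(C^*)$.

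For the second term, I use only that $U$ takes values in $[0,1]$. This gives $0\le U^*-U(\hat{x})\le 1$ always, so the term is at most $\Pr(G^c)\le\delta$. One technicality must be checked: $Q_{C_v}$ must be defined at termination, i.e.\ $\mathcal{X}_v\neq\emptyset$. This holds because the algorithm only moves to children with nonempty regions, since empty children are filtered before testing, and the root is $\mathcal{X}$. Adding the two bounds gives $\mathbb{E}[r_T]\le\mathrm{Spread}(C^*)+\delta$.

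There is no real obstacle, since all the probabilistic work is done in Theorem~\ref{app:thm:noisy}. The only care needed is to use the pointwise bound on $G$, rather than an expectation over $Q_{C_v}$ that might include actions outside $\mathrm{Region}(C^*)$, and to note that the randomness of $\hat{x}$ is covered by conditioning on the history.
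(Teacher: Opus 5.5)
Your proof is correct and follows the paper's argument: split on the coverage event of Theorem~\ref{app:thm:noisy}, bound the regret by $\mathrm{Spread}(C^*)$ on that event and by $1$ off it. You also check that $Q_{C_v}$ is well defined at termination, which the paper's proof leaves implicit, and you bound the good-event term directly by $\mathrm{Spread}(C^*)\Pr(G)$, which is slightly cleaner than the paper's intermediate $(1-\delta)\mathrm{Spread}(C^*)+\delta$ step.
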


\begin{proof}
Let $E$ be the success event in Theorem~\ref{app:thm:noisy}. Then $\Pr(E)\ge 1-\delta$. On $E$, the output satisfies $r_T=U^*-U(\hat{x})\le \mathrm{Spread}(C^*)$. On $E^c$, since $U\in[0,1]$, we have $r_T\le 1$. Therefore
\[
\mathbb{E}[r_T]
\le
(1-\delta)\mathrm{Spread}(C^*)+\delta\cdot 1
\le
\mathrm{Spread}(C^*)+\delta.
\]
\end{proof}

\subsection{Identification setting}
\label{app:identity-regime}

This section analyzes \textsc{TRACE-I}, which augments the falsification-setting test-before-commit core with an identification-aware proposer that biases its candidates toward constraints extracted from past feedback.

\paragraph{Identification-aware proposer.}
The proposer maintains a running set $W_t\subseteq\mathcal{H}$ initialized to $\emptyset$. After every round in which $\mathrm{Extract}(f_t)=c\neq\emptyset$, it updates $W_{t+1}\leftarrow W_t\cup\{c\}$, matching Line~10 of Algorithm~\ref{alg:robust}; under the soundness clause of Assumption~\ref{app:ass:feedback}, every such $c$ lies in $C^*$, so the update is sound regardless of whether $\mathrm{Viol}(f_t;C)=1$ also fires, and the discovery argument below only uses the subset of these updates that occur on rounds with $\mathrm{Viol}(f_t;C)=1$. At each proposal call from node $v$, it returns a uniform random element of $W_t\cap\mathcal{H}_v$ when this set is nonempty, and a uniform random element of $\mathcal{H}_v$ otherwise, with $\mathcal{H}_v=\mathcal{H}\setminus(C_v\cup R_v)$ as in Algorithm~\ref{alg:robust}; it returns $\emptyset$ if $\mathcal{H}_v=\emptyset$.

\begin{theorem}[Coverage in the identification setting; full version of Theorem~\ref{thm:identity}]
\label{app:thm:identity}
Suppose Assumptions~\ref{app:ass:feedback}, \ref{app:ass:generation}, and~\ref{app:ass:identity-progress} hold (in particular $C^*\subseteq\mathcal{H}$; Assumption~\ref{app:ass:identity-progress} implies Assumption~\ref{app:ass:eps-sep}). Fix $\delta\in(0,1)$ and $\varepsilon\in(0,1]$. With
\[
N_{\max}^{\mathrm{id}}
:= \left\lceil
\frac{64}{p_{\mathrm{ext}}}
\left(K+\log\frac{4}{\delta}\right)
\right\rceil,
\qquad
m_{\mathrm{exp}}^{\mathrm{id}}
:= \left\lceil
\frac{\log(2N_{\max}^{\mathrm{id}}/\delta)}
{c_{\mathrm{fal}}\varepsilon^\beta}
\right\rceil,
\qquad
T:=N_{\max}^{\mathrm{id}}m_{\mathrm{exp}}^{\mathrm{id}},
\]
\textsc{TRACE-Identification} enjoys an $(\varepsilon,\delta)$-coverage guarantee: with probability at least $1-\delta$, the deepest committed node $v$ satisfies $\mathrm{Region}(C_{v})\subseteq\mathrm{Region}(C^*)$. In the canonical proof this follows from the sufficient event $C_{v}\supseteq C^*$. Consequently, if $\hat{x}\sim Q_{C_{v}}$, then $\hat{x}\in\mathrm{Region}(C^*)$ and $U(\hat{x})\ge U^*-\mathrm{Spread}(C^*)$. With $L_I:=K+\log(4/\delta)$, the total interaction budget is
\[
T
=
\mathcal{O}\!\left(
\frac{
L_I\log(4L_I/(p_{\mathrm{ext}}\delta))
}{
p_{\mathrm{ext}}\,c_{\mathrm{fal}}\varepsilon^\beta
}
\right)
=
\widetilde{\mathcal{O}}\!\left(
\frac{K}{p_{\mathrm{ext}}\,c_{\mathrm{fal}}\varepsilon^\beta}
\right).
\]
\end{theorem}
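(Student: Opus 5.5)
We follow the template of Theorem~\ref{app:thm:noisy}: split the failure event into a safety failure and a progress failure. Write $N:=N_{\max}^{\mathrm{id}}$, $m:=m_{\mathrm{exp}}^{\mathrm{id}}$. Let $\mathcal{B}$ be the event that some child with $\Delta>0$ is committed. Lemma~\ref{app:lem:safety} applies to any history-dependent proposer, and Assumption~\ref{app:ass:identity-progress} implies Assumption~\ref{app:ass:eps-sep}. Hence the choice of $m$ gives $\Pr(\mathcal{B})\le N e^{-mc_{\mathrm{fal}}\varepsilon^\beta}\le\delta/2$, and every test has a full window of $m$ rounds. It remains to show that, on $\mathcal{B}^c$, all of $C^*$ is committed within $N$ tests except with probability $\delta/2$.

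On $\mathcal{B}^c$ every committed node $v$ is viable. Under Assumption~\ref{app:ass:identity-progress}, every nonempty child of $v$ from $\mathcal{H}\setminus C^*$ has gap $>\varepsilon$, so committed constraints all lie in $C^*$. We track the potential $\Phi:=|C_v\cup W|$, which takes values in $\{0,\dots,K\}$ by soundness and never decreases. We classify each test into three types.
\begin{enumerate}[label=(\roman*)]
\item If $W\cap\mathcal{H}_v\neq\emptyset$, the proposed constraint lies in $C^*\setminus C_v$ by soundness. The resulting child is viable and never falsified, so it is committed and $|C_v|$ increases. This happens at most $K$ times in total.
\item If the proposal is a true constraint drawn blindly, it is likewise committed. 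This also counts toward the at most $K$ commitments.
\item If the proposal is a blind spurious candidate with a nonempty region, the test either falsifies the child or the child survives; survival is excluded on $\mathcal{B}^c$ except on the already-budgeted failure event. A falsification is exactly the informative event, and by the extraction-progress paragraph following Assumption~\ref{app:ass:identity-progress} it raises $\Phi$ with conditional probability at least $p_{\mathrm{ext}}$ given the history. Blind proposals occur only when $W\cap\mathcal{H}_v=\emptyset$, i.e.\ $W\subseteq C_v\cup R_v$, and at a viable node true constraints never enter $R_v$ (Remark~\ref{app:rem:lazy-filter}). So $W\subseteq C_v$ at that point, and the progress paragraph applies with $W_{\mathrm{start}}\subseteq C_v$.
\end{enumerate}
Empty-region proposals consume no tests and are excluded for free. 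Each test of type (iii) is therefore a Bernoulli trial that succeeds with conditional probability at least $p_{\mathrm{ext}}$, and each success raises $\Phi$. Since $\Phi\le K$, at most $K$ successes can occur, so the number of type-(iii) tests is at most the number of trials needed to obtain $K$ successes.

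Combining, on $\mathcal{B}^c$ failure to cover within $N$ tests requires more than $N-2K$ type-(iii) tests with fewer than $K$ successes among them. Bound the success count below by a sum of $N-2K$ i.i.d.\ Bernoulli$(p_{\mathrm{ext}})$ variables via a standard martingale coupling (or Freedman's inequality), then apply a multiplicative Chernoff bound. With $N\ge 64(K+\log(4/\delta))/p_{\mathrm{ext}}$, the mean $p_{\mathrm{ext}}(N-2K)$ is at least $32(K+\log(4/\delta))$, well above $2K$. So $\Pr[\mathrm{Bin}<K]\le\exp(-\mu/8)\le\delta/4\le\delta/2$, using $2K\le N/2$ since $p_{\mathrm{ext}}\le1$. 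The constant $64$ gives generous slack. A union bound with $\mathcal{B}$ yields $C_v\supseteq C^*$ with probability at least $1-\delta$. The region inclusion, the bound $U(\hat x)\ge U^*-\mathrm{Spread}(C^*)$, and the budget $T=Nm=\widetilde{\mathcal{O}}(K/(p_{\mathrm{ext}}c_{\mathrm{fal}}\varepsilon^\beta))$ then follow as in Theorem~\ref{app:thm:noisy}.

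The main obstacle is making the type-(iii) accounting rigorous. We must justify that $W\subseteq C_v$ at the start of each blind test, so that each blind falsification yields genuinely new progress. We must also handle the dependence across tests: the Chernoff step has to use only conditional success probabilities given the history, not independence between tests. The remaining pieces reuse Lemma~\ref{app:lem:safety} and the arguments already given for Theorem~\ref{app:thm:noisy}.
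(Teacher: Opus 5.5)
Your proposal is correct and follows the paper's proof. Both use the $\delta/2$ safety split via Lemma~\ref{app:lem:safety}, the reduction to $W\subseteq C_v$ before a blind proposal via Remark~\ref{app:rem:lazy-filter}, the extraction-progress step, and a multiplicative Chernoff bound driven by the factor $64$. The differences are only bookkeeping. The paper's potential $\Psi=|C_v|+|C_v\cup W_t|\le 2K-1$ makes every pre-recovery test a trial, so it avoids your random-subsequence coupling over type-(iii) tests. The paper also counts survival of a spurious child as a success in $Y_n$. That is what makes the conditional success probability given the history genuinely at least $p_{\mathrm{ext}}$; raising $\Phi$ alone is only guaranteed probability $p_{\mathrm{ext}}\bigl(1-(1-q_u)^m\bigr)$, so your Bernoulli claim needs this same fold-in.
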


\begin{proof}
Let $N:=N_{\max}^{\mathrm{id}}$, $m:=m_{\mathrm{exp}}^{\mathrm{id}}$, and $p:=p_{\mathrm{ext}}$. Assumption~\ref{app:ass:identity-progress} implies Assumption~\ref{app:ass:eps-sep}, so by Lemma~\ref{app:lem:safety} and the choice of $m$, with probability at least $1-\delta/2$ no child with $\Delta>0$ is committed during the first $N$ tests, and each of these tests has a complete window. Call \emph{recovery} the first time at which $C^*\subseteq C_v$.

Before recovery and before any child with $\Delta>0$ is committed, every committed node is viable and, by Assumption~\ref{app:ass:identity-progress}, every committed constraint is true, so $C_v\subseteq C^*$; soundness gives $W_t\subseteq C^*$. We track the potential
\[
\Psi:=|C_v|+|C_v\cup W_t|\le 2K,
\]
which never decreases and gains one unit when a true constraint is discovered and one unit when it is committed. Consider a test in this period. A prioritized test proposes a constraint of $W_t\cap\mathcal{H}_v\subseteq C^*\setminus C_v$, which is committed, so $\Psi$ increases. Otherwise $W_t\cap\mathcal{H}_v=\emptyset$, which at a viable node means $W_t\subseteq C_v$, because missing true constraints never enter $R_v$ (Remark~\ref{app:rem:lazy-filter}). A fallback proposal in $C^*$ is then committed and increases $\Psi$ by two. A fallback proposal outside $C^*$ is either committed, which is a commitment with $\Delta>0$, or rejected after a violation; in the latter case, with conditional probability at least $p$, the extractor returns a true constraint that is new relative to the working set at the start of the test, so $\Psi$ increases during this test (see the discussion after Assumption~\ref{app:ass:identity-progress}). Hence, given the history, each such test increases $\Psi$ or commits a child with $\Delta>0$ with probability at least $p$.

For $n\le N$, let $Y_n:=1$ if the $n$-th test increases $\Psi$ or commits a child with $\Delta>0$, and let $Y_n:=1$ if this test starts after recovery or after such a commitment, or does not take place. Before these events the run does not stop within $N$ tests: a missing true constraint stays in $\mathcal{H}_v$, and the budget covers $N$ complete windows. Therefore $\Pr(Y_n=1\mid\mathcal{F}_{n-1})\ge p$ for the history $\mathcal{F}_{n-1}$ before the $n$-th test, and the tower property gives
\[
\mathbb{E}\Big[\exp\Big(-\lambda\sum_{n=1}^{N}Y_n\Big)\Big]\le\big(1-p+pe^{-\lambda}\big)^{N}\qquad\text{for every }\lambda>0.
\]
Since $Np\ge 64(K+\log(4/\delta))$, we have $2K\le Np/2$ and $e^{-Np/8}\le\delta/2$, so the multiplicative Chernoff lower-tail bound gives $\Pr[\sum_{n=1}^{N}Y_n<2K]\le\delta/2$.

If recovery does not occur within the first $N$ tests and no child with $\Delta>0$ is committed, then every $Y_n=1$ records an increase of $\Psi$, which starts at $0$ and stays at most $2K-1$ because $|C_v|\le K-1$; hence $\sum_{n=1}^{N}Y_n<2K$. Therefore recovery occurs within the first $N$ tests, and thus within $T=Nm$ rounds, with probability at least $1-\delta$. Later commitments only add constraints, so at termination $C_v\supseteq C^*$ and $\mathrm{Region}(C_v)\subseteq\mathrm{Region}(C^*)$; the support of $Q_{C_v}$ and the definition of $\mathrm{Spread}(C^*)$ give the remaining claims, as in Corollary~\ref{app:cor:expected-regret}.
\end{proof}

\subsection{Anytime gap-UCB bound}
\label{app:ucb-bonus}

The survival argument also yields an anytime confidence bound on the gap of a tested node. It is not needed for Theorems~\ref{app:thm:noisy} and~\ref{app:thm:identity}, but it reads the test of Algorithm~\ref{alg:robust} as a confidence rule.

\begin{lemma}[Anytime gap upper confidence bound]
\label{app:lem:anytime-ucb}
Fix a node $v$ for which $Q_{C_v}$ is defined, and consider an independent sequence of rounds at $v$. For $m\ge 1$, let $A_m:=\{\mathrm{Viol}(f_s;C_v)=0\ \text{for all }1\le s\le m\}$ be the event that the first $m$ rounds at $v$ produce no violation signal. For $\delta\in(0,1)$, define
\[
\widehat{\Delta}_{\mathrm{UCB}}(v;m,\delta)
:=
\left(\frac{\log(1/\delta)}{c_{\mathrm{fal}}m}\right)^{1/\beta}.
\]
Then, with probability at least $1-\delta$, $\Delta(v)\le \widehat{\Delta}_{\mathrm{UCB}}(v;m,\delta)$ holds simultaneously for every $m\ge 1$ such that $A_m$ occurs. The events $A_m$ concern the initial violation-free prefix; violation-free runs that restart after a violation are not covered, since each restart would require additional confidence.
\end{lemma}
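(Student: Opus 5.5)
The plan is to reduce the simultaneous statement to a single bad event. Since the sequence is an initial violation-free prefix, the events are nested, $A_1\supseteq A_2\supseteq\cdots$, and the threshold $\widehat{\Delta}_{\mathrm{UCB}}(v;m,\delta)$ is nonincreasing in $m$. The first step is to dispose of the trivial case $\Delta(v)=0$: then the bound holds for every $m$, because the threshold is positive. So fix $\Delta:=\Delta(v)>0$; this is a deterministic quantity of the node and does not depend on the samples.

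For $\Delta>0$ the bound fails for some $m$ with $A_m$ occurring exactly when $A_m$ occurs for some $m$ with $\widehat{\Delta}_{\mathrm{UCB}}(v;m,\delta)<\Delta$. Solving the threshold inequality, this condition is $m>m_0:=\log(1/\delta)/(c_{\mathrm{fal}}\Delta^\beta)$. Let $m_1:=\lfloor m_0\rfloor+1$ be the smallest integer with $m>m_0$. By nestedness, the bad event equals $\bigcup_{m\ge m_1}A_m=A_{m_1}$, which collapses the union over infinitely many $m$ to a single event and so requires no union bound.

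Next, I would invoke Lemma~\ref{app:lem:multi-sample} (exact-gap form): $\Pr(A_{m_1})=(1-q_v)^{m_1}\le\exp(-m_1c_{\mathrm{fal}}\Delta^\beta)$. Since $m_1>m_0$, the exponent exceeds $\log(1/\delta)$, so the probability is below $\delta$. Complementing gives the claim with probability at least $1-\delta$, simultaneously over all $m$.

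The only subtle point is justifying nestedness and the exact-gap bound for all prefixes. Both use that the rounds at $v$ form one fixed independent sequence, as in Assumption~\ref{app:ass:generation}, with $A_m$ defined on its initial prefix. This explains the lemma's caveat that restarts after a violation are not covered, since they would break nestedness. A minor check is that Lemma~\ref{app:lem:multi-sample}'s bound applies for any $\Delta(v)\in(0,1]$ via the normalization $c_{\mathrm{fal}}\le1$.
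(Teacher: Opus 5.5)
Your proof is correct and follows the paper's argument step for step. You dispose of $\Delta(v)=0$, identify the failure set as $m\ge\lfloor\log(1/\delta)/(c_{\mathrm{fal}}\Delta(v)^\beta)\rfloor+1$, use the nesting $A_{m+1}\subseteq A_m$ to collapse the failure event to the single event $A_{m_1}$, and bound its probability below $\delta$ via Lemma~\ref{app:lem:multi-sample}, with no union bound over $m$.
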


\begin{proof}
If $\Delta(v)=0$ there is nothing to prove. Otherwise let $a:=c_{\mathrm{fal}}\Delta(v)^\beta$. Lemma~\ref{app:lem:multi-sample} gives $\Pr(A_m)\le e^{-ma}$. The radius decreases in $m$, and $\Delta(v)>\widehat{\Delta}_{\mathrm{UCB}}(v;m,\delta)$ holds exactly when $m\ge m_*:=\lfloor\log(1/\delta)/a\rfloor+1$. Because $A_{m+1}\subseteq A_m$, the failure event $\{\exists m\ge1: A_m\text{ occurs and }\Delta(v)>\widehat{\Delta}_{\mathrm{UCB}}(v;m,\delta)\}$ equals $A_{m_*}$, whose probability is at most $e^{-m_*a}<\delta$.
\end{proof}

\paragraph{Interpretation.}
The bound in Lemma~\ref{app:lem:anytime-ucb} is the gap-dependent analogue of a UCB confidence radius. Because the prefix events are nested, no union bound over $m$ is needed, and for fixed $\delta$ the radius decays as $\mathcal{O}(m^{-1/\beta})$ without a logarithmic factor in $m$; larger $\beta$ corresponds to a more ambiguous feedback channel and slower decay.

\begin{remark}[Anytime reading of the test in Algorithm~\ref{alg:robust}]
\label{app:rem:adaptive}
Accepting a child once $\widehat{\Delta}_{\mathrm{UCB}}(v;m,\delta)\le\varepsilon$ and rejecting it at the first violation is exactly the test of Algorithm~\ref{alg:robust}: the radius depends on the data only through survival, so acceptance happens after the fixed window $\lceil\log(1/\delta)/(c_{\mathrm{fal}}\varepsilon^\beta)\rceil$, which equals $m_{\mathrm{exp}}$ when $\delta$ is replaced by $\delta/(2N_{\max})$. Lemma~\ref{app:lem:anytime-ucb} therefore certifies, at every round of a test, how large the gap of a surviving child can still be.
\end{remark}

\section{Additional Experimental Details}
\label{app:experimental-details}

This appendix provides the full experimental details omitted from the main text. Section~\ref{app:env-details} describes the environments and implementation choices. Section~\ref{app:saturated-tasks} reports the three remaining task instances of the main comparison and Wilson confidence intervals for the main success-rate tables. Section~\ref{app:recmovie-full} gives the RecMovie ablations, robustness checks, and equal-output-budget comparisons. Section~\ref{app:twentyq-eda-full} covers \textsc{20Q}, \textsc{EDA}, \textsc{Mastermind}, and Poem. Section~\ref{app:trace-recmovie} gives an example execution trace, and Section~\ref{app:prompts} lists the prompts of the LLM-based components.

\subsection{Environment and implementation details}
\label{app:env-details}

\paragraph{Common setup.}
Unless otherwise noted, \textsc{TRACE} uses $m_{\mathrm{exp}}=15$ and $N_{\max}=15$. The headline $\mathrm{Viol}$ and $\mathrm{Extract}$ modules are rule-based: a regex over each environment's feedback template, with task-specific decoders that map matched phrases to canonical predicates in $\mathcal{H}$. \textsc{TRACE-Falsification} (\textsc{TRACE-F}) uses a uniform proposer; \textsc{TRACE-Identification} (\textsc{TRACE-I}) uses the same decoder to extract identities and prioritize them in the proposer. Qwen3-4B served locally is used for (i) the prompt-based LLM baselines (\textsc{DirectLLM}, \textsc{Reflexion}, \textsc{Self-Refine}), (ii) the Qwen3-4B configuration of \textsc{TravelPlanner}'s constrained plan generator $Q_C$ in Section~\ref{sec:travelplanner}, and (iii) the LLM-based $\mathrm{Viol}$ / $\mathrm{Extract}$ ablations in Tables~\ref{tab:app-proposer-recmovie} and~\ref{tab:app-detector-recmovie}; these ablations confirm that headline TRACE rows do not rely on LLM oracle modules, since substituting the LLM detector for the rule-based detector drops TRACE-F's interaction-budget any-success from $57\%$ to $21\%$ on RecMovie. The \textsc{Mastermind} column of the main comparison (Table~\ref{tab:app-saturated-tasks}) fixes $K=3$ and $n_{\mathrm{colors}}=6$; Haiku targets the $5$--$7$--$5$ syllable pattern and Tanka the $5$--$7$--$5$--$7$--$7$ pattern. Wilson $95\%$ intervals for all main-table cells are reported in Section~\ref{app:main-ci}; small differences within overlapping intervals should not be over-interpreted. Prompt-based baselines (\textsc{DirectLLM}, \textsc{Reflexion}, \textsc{Self-Refine}) are operational comparisons under a fixed local model, rather than a multi-model robustness study. All reported results use $100$ seeds per cell, except \textsc{TravelPlanner}, which uses $50$ validation queries (Section~\ref{sec:travelplanner}). The primary metric is \emph{env.success}, the fraction of seeds on which a method produces a satisfying output within its interaction budget; for RecMovie \textsc{TRACE} rows this is the final-output success of the algorithm's converged recommendation list. Section~\ref{app:recmovie-full} additionally reports an interaction-budget any-success diagnostic and a soft-utility breakdown. When available, we also report $\mathrm{superset}(C^*)$, which checks whether the committed constraint set contains all true constraints and serves as a sufficient syntactic proxy for region inclusion.

\paragraph{Compute resources.}
All experiments were run on a local workstation with a single NVIDIA GeForce RTX 5080 GPU. LLM-based baselines, the \textsc{TravelPlanner} plan generator $Q_C$, and the LLM-based $\mathrm{Viol}$ / $\mathrm{Extract}$ ablation rows used Qwen3-4B on this workstation, and the DeepSeek-V4-Flash configuration of \textsc{TravelPlanner} called the DeepSeek API; headline TRACE runs and the remaining environment simulation, rule-based decoding, and tabulation code ran on CPU.

\paragraph{LLM baselines.}
LLM baselines use Qwen3-4B. We compare against \textsc{DirectLLM}, \textsc{Reflexion}~\citep{shinn2023reflexion}, and \textsc{Self-Refine}~\citep{madaan2023selfrefine}. \textsc{DirectLLM} conditions on the full interaction history. \textsc{Reflexion} follows the original trial-based design with sliding-window reflective memory; \textsc{Self-Refine} follows \citet{madaan2023selfrefine} with full-history critique-and-revise.

\paragraph{RecMovie.}
Each seed samples a user profile across multiple attribute dimensions (movie type, year range, genre, age restriction); the corresponding latent constraint set $C^*$ collects the dimensions actually set in the profile, so $K = |C^*|$ varies per seed. On each round, the agent proposes three movie titles, and the environment returns template-based verbal feedback. All LLM baselines run for $T=20$ total interaction steps on RecMovie; trial-based Reflexion partitions this budget across trials. Across $100$ seeds, the LLM baselines that succeed do so within roughly $3$ steps on average (DirectLLM mean $3.7$, Self-Refine mean $2.9$), while failed seeds exhaust nearly the full budget. Appendix~\ref{app:equal-budget} compares all methods under common caps of $20$ and $60$ outputs; there, the windowed baseline variants gain $2$--$10$pp from the larger cap.

\paragraph{\textsc{20Q}.}
\textsc{20Q} is a controlled synthetic environment with five categorical attribute dimensions yielding $|\mathcal{H}|=17$ candidate predicates over a dense $|\mathcal{X}|=432$-entity catalog (every attribute combination has exactly one entity). Each episode samples a profile inducing $C^*$ whose size $K = |C^*|$ varies per seed; feedback is templated per dimension, e.g., ``\textit{the target is not a mammal}''. Because the catalog is dense, $\mathrm{Region}(C_v)$ is nonempty for every consistent constraint set $C_v$.

\paragraph{\textsc{EDA Things}.}
We use the $99$-entity test set of Apple's Entity-Deduction Arena (\textsc{EDA Things})~\citep{zhang2024probing}, annotated by us with five categorical attribute dimensions; annotations were initialized by Qwen3-4B and manually corrected. Each episode samples a profile inducing $C^*$ whose size $K = |C^*|$ varies per seed. The catalog is sparse, so when a committed constraint set yields an empty region, the generator falls back to the entity satisfying the largest number of committed predicates.

\paragraph{\textsc{Mastermind}.}
We use a per-position \textsc{Mastermind} environment, following the \emph{Clear Mastermind} feedback style~\citep{li2024playing}. The hidden code has length $K$, and each position takes one color from a palette of size $n_{\mathrm{colors}}$. Feedback is per-position: each proposed position is marked as ``exact'', ``present'', or ``absent''. The candidate class is $\mathcal{H}=\{(i,c): i\in[K],\, c\in\mathrm{palette}\}$ with $|\mathcal{H}|=K\cdot n_{\mathrm{colors}}$. This environment allows controlled sweeps over both $K$ and $|\mathcal{H}|$.

\paragraph{LLF-Bench Poem.}
For LLF-Bench Poem~\citep{cheng2023llf}, we use the Haiku and Tanka tasks without changing their reward, feedback content, or target pattern. Haiku has target syllable pattern $5$--$7$--$5$ and thus $K=3$ line-level constraints. Tanka has target pattern $5$--$7$--$5$--$7$--$7$ and thus $K=5$ constraints. We apply a runtime patch to the upstream \texttt{cmudict} lookup so that missing words fall back to the existing heuristic syllable estimator; this patch is used only for words outside the CMU dictionary. The generator is a word-bucket generator that samples words from syllable-indexed buckets and composes each line to satisfy a proposed syllable count. The predicate class is $\mathcal{H}=\{(\mathrm{line}_i,\mathrm{syll}_s): i\in[K],\ s\in\{3,4,5,6,7,8\}\}$ with $|\mathcal{H}|=6K$. \textsc{TRACE} reads only feedback, not the task prompt, so the target syllable pattern is discovered from the environment replies.

\paragraph{$K$ across environments.}
$K = |C^*|$ is fixed per experiment in \textsc{Mastermind} (main comparison: $K=3$, $n_{\mathrm{colors}}=6$; H-sweep: $K=3$; K-sweep: $K\in\{2,4,6,8,10\}$) and Poem (Haiku: $K=3$; Tanka: $K=5$). In RecMovie, \textsc{20Q}, and \textsc{EDA Things}, $K$ is sampled per seed by the environment's profile sampler.

\subsection{Remaining main-comparison tasks and confidence intervals}
\label{app:saturated-tasks}

\paragraph{Remaining task instances.}
For headline focus, Table~\ref{tab:main-comparison} reports only the three tasks where the falsification-to-identification gap is non-saturated and the lossy-extraction story (RecMovie) is empirically separable. Table~\ref{tab:app-saturated-tasks} records the corresponding natural-feedback main-comparison numbers on the three remaining task instances: \textsc{Mastermind} ($K=3$, $n_{\mathrm{colors}}=6$), LLF-Bench Haiku ($K=3$ syllable constraints), and LLF-Bench Tanka ($K=5$). \textsc{TRACE-I} reaches $99$--$100\%$ on all three; natural-feedback prompt baselines remain below $30\%$. Wilson $95\%$ intervals are in Table~\ref{tab:app-main-ci} (Mmind / Haiku / Tanka columns).

\begin{table}[h]
\centering
\small
\caption{Main-comparison env.success on the three task instances moved out of Table~\ref{tab:main-comparison} for headline focus. Entries are env.success within each method's interaction budget; higher is better.}
\label{tab:app-saturated-tasks}
\begin{tabular}{lrrr}
\toprule
\textbf{Method} & \textbf{Mastermind} & \textbf{Haiku} & \textbf{Tanka} \\
\midrule
Random                                 & $\phantom{1}4\%$  & $\phantom{1}7\%$  & $\phantom{1}0\%$ \\
DirectLLM                              & $15\%$ & $20\%$ & $\phantom{1}4\%$ \\
Reflexion~\citep{shinn2023reflexion}     & $\phantom{1}9\%$  & $\phantom{1}2\%$  & $\phantom{1}1\%$ \\
Self-Refine~\citep{madaan2023selfrefine} & $29\%$ & $14\%$ & $\phantom{1}0\%$ \\
\textsc{TRACE-F}                       & $54\%$ & $54\%$ & $14\%$ \\
\textbf{\textsc{TRACE-I}}              & $\mathbf{99\%}$ & $\mathbf{100\%}$ & $\mathbf{100\%}$ \\
\bottomrule
\end{tabular}
\end{table}

\paragraph{Wilson confidence intervals.}\label{app:main-ci}
All success rates in the main tables use $100$ seeds per cell, except Table~\ref{tab:travelplanner} ($50$ queries). Tables~\ref{tab:app-main-ci}--\ref{tab:app-same-oracle-ci} report Wilson $95\%$ confidence intervals rounded to the nearest percentage point. These intervals are intended for interpreting success-rate differences, not for changing the primary point estimates shown in the main text.

\begin{table}[h]
\centering
\small
\setlength{\tabcolsep}{3pt}
\caption{Wilson $95\%$ confidence intervals for natural-feedback baselines and \textsc{TRACE} rows. The first three columns correspond to the Natural columns (for baselines) and the \emph{Falsif.}/\emph{Identif.}\ columns (for \textsc{TRACE}) of Table~\ref{tab:main-comparison}; the other three columns correspond to the columns of Table~\ref{tab:app-saturated-tasks}. Entries are intervals for env.success percentages.}
\label{tab:app-main-ci}
\begin{tabular}{lrrrrrr}
\toprule
\textbf{Method} & \textbf{Rec.} & \textbf{\textsc{20Q}} & \textbf{\textsc{EDA}} & \textbf{Mmind} & \textbf{Haiku} & \textbf{Tanka} \\
\midrule
Random & $[2,11]$ & $[32,51]$ & $[65,82]$ & $[2,10]$ & $[3,14]$ & $[0,4]$ \\
DirectLLM & $[11,26]$ & $[7,20]$ & $[34,53]$ & $[9,23]$ & $[13,29]$ & $[2,10]$ \\
Reflexion & $[18,35]$ & $[6,19]$ & $[45,64]$ & $[5,16]$ & $[1,7]$ & $[0,5]$ \\
Self-Refine & $[14,30]$ & $[64,81]$ & $[69,85]$ & $[21,39]$ & $[9,22]$ & $[0,4]$ \\
\textsc{TRACE-F} & $[41,61]$ & $[76,90]$ & $[79,92]$ & $[44,63]$ & $[44,63]$ & $[9,22]$ \\
\textbf{\textsc{TRACE-I}} & $\mathbf{[83,94]}$ & $\mathbf{[96,100]}$ & $\mathbf{[95,100]}$ & $\mathbf{[95,100]}$ & $\mathbf{[96,100]}$ & $\mathbf{[96,100]}$ \\
\bottomrule
\end{tabular}
\end{table}

\begin{table}[h]
\centering
\small
\caption{Wilson $95\%$ confidence intervals for the \textsc{Mastermind} fixed-$K$ $H$-sweep in Table~\ref{tab:h-scaling}. Entries are intervals for env.success percentages.}
\label{tab:app-h-ci}
\begin{tabular}{rrrr}
\toprule
\textbf{$n_{\mathrm{colors}}$} & \textbf{$|\mathcal{H}|$} & \textbf{\textsc{TRACE-F} CI} & \textbf{\textsc{TRACE-I} CI} \\
\midrule
$3$  &  $9$ & $[96,100]$ & $[96,100]$ \\
$6$  & $18$ & $[96,100]$ & $[96,100]$ \\
$10$ & $30$ & $[93,99]$ & $[96,100]$ \\
$15$ & $45$ & $[80,93]$ & $[96,100]$ \\
$20$ & $60$ & $[65,82]$ & $[95,100]$ \\
\bottomrule
\end{tabular}
\end{table}

\begin{table}[h]
\centering
\small
\caption{Wilson $95\%$ confidence intervals for the same-decoder structured comparison in Table~\ref{tab:structure-necessity}. Entries are intervals for env.success percentages.}
\label{tab:app-structure-ci}
\begin{tabular}{lrrr}
\toprule
\textbf{Configuration} & \textbf{RecMovie} & \textbf{\textsc{20Q}} & \textbf{\textsc{EDA Things}} \\
\midrule
\textbf{\textsc{TRACE-I}} & $\mathbf{[83,94]}$ & $\mathbf{[96,100]}$ & $[95,100]$ \\
Direct-Commit (best variant) & $[16,32]$ & $\mathbf{[96,100]}$ & $\mathbf{[96,100]}$ \\
\bottomrule
\end{tabular}
\end{table}

\begin{table}[h]
\centering
\small
\caption{Wilson $95\%$ confidence intervals for the binary and identity oracle baselines in the \emph{Falsif.}\ and \emph{Identif.}\ columns of Table~\ref{tab:main-comparison}. Entries are intervals for env.success percentages.}
\label{tab:app-same-oracle-ci}
\begin{tabular}{lrrr}
\toprule
\textbf{Method} & \textbf{Rec.} & \textbf{\textsc{20Q}} & \textbf{\textsc{EDA}} \\
\midrule
\multicolumn{4}{l}{\textit{Group A: binary oracle}} \\
DirectLLM     & $[19,36]$ & $[34,53]$  & $[76,90]$ \\
Reflexion     & $[26,45]$ & $[21,39]$  & $[59,77]$ \\
Self-Refine   & $[18,35]$ & $[13,28]$  & $[31,50]$ \\
\textbf{\textsc{TRACE-F}}   & $\mathbf{[41,61]}$ & $\mathbf{[76,90]}$  & $\mathbf{[79,92]}$ \\
\midrule
\multicolumn{4}{l}{\textit{Group B: identity oracle}} \\
DirectLLM     & $[31,50]$ & $\mathbf{[96,100]}$ & $\mathbf{[96,100]}$ \\
Reflexion     & $[33,52]$ & $\mathbf{[96,100]}$ & $\mathbf{[96,100]}$ \\
Self-Refine   & $[24,42]$ & $\mathbf{[96,100]}$ & $\mathbf{[96,100]}$ \\
\textbf{\textsc{TRACE-I}} & $\mathbf{[83,94]}$ & $\mathbf{[96,100]}$ & $[95,100]$ \\
\bottomrule
\end{tabular}
\end{table}

\subsection{Full RecMovie results}
\label{app:recmovie-full}

\paragraph{Soft-utility breakdown.}
Table~\ref{tab:app-recmovie-soft} extends the main RecMovie comparison with a soft-utility breakdown of the final recommendation list. The soft utility $U_{\mathrm{final}}$ of a list is the mean, over its three recommended titles, of the fraction of canonical $C^*$ predicates satisfied by each title; soft regret is $1-U_{\mathrm{final}}$. The \textbf{final env.} column reproduces the headline final-output success used in the main tables.

\begin{table}[h]
\centering
\small
\caption{RecMovie final-output soft-score / soft-regret over $100$ seeds. $U_{\mathrm{final}}$ is mean partial-credit utility and $1-U_{\mathrm{final}}$ is soft regret. The \textbf{final env.} column matches the headline RecMovie numbers in the main tables.}
\label{tab:app-recmovie-soft}
\begin{tabular}{lrrr}
\toprule
\textbf{Method} & \textbf{final env.} & \textbf{$U_{\mathrm{final}}$} & \textbf{soft regret} \\
\midrule
\textsc{TRACE-F}   & $51\%$ & $0.641$ & $0.359$ \\
\textbf{\textsc{TRACE-I}} & $\mathbf{90\%}$ & $\mathbf{0.831}$ & $\mathbf{0.169}$ \\
Direct-AND            & $15\%$ & $0.496$ & $0.504$ \\
Direct-OR             & $23\%$ & $0.528$ & $0.472$ \\
Direct-RANKED         & $22\%$ & $0.527$ & $0.473$ \\
Random                & $\phantom{1}5\%$ & $0.328$ & $0.672$ \\
\bottomrule
\end{tabular}
\end{table}

\paragraph{Interaction-budget any-success.}
For completeness, the alternative interaction-budget any-success diagnostic (the fraction of seeds on which any tested recommendation list during the run was accepted by the env, regardless of whether the algorithm subsequently moved past it) gives \textsc{TRACE-F}: $57\%$ and \textsc{TRACE-I}: $95\%$ on the same $100$ seeds; LLM baselines and direct-commit variants return on first success, so for them this any-success rate coincides with the final-output rate above.

\paragraph{Proposer ablation.}
Table~\ref{tab:app-proposer-recmovie} fixes the \textsc{TRACE} commit/reject scaffold and varies how candidate identities are proposed. The uniform proposer realizes the falsification setting; Incremental-LM extracts predicate candidates from each feedback message with an LLM; the decoded proposer uses the task's canonical feedback templates.

\begin{table}[h]
\centering
\small
\caption{Proposer variants inside \textsc{TRACE} on RecMovie with the same rule-based detector; entries are interaction-budget any-success.}
\label{tab:app-proposer-recmovie}
\begin{tabular}{lr}
\toprule
\textbf{Proposer} & \textbf{any-success} \\
\midrule
\textsc{TRACE-F} (uniform)
& 57\% \\
Incremental-LM (atomic per-feedback)
& 74\% \\
\textbf{\textsc{TRACE-I} (decoded)}
& \textbf{95\%} \\
\bottomrule
\end{tabular}
\end{table}

\paragraph{Detector ablation.}
Table~\ref{tab:app-detector-recmovie} compares detector implementations on RecMovie while fixing the algorithm to \textsc{TRACE-F}. The rule-based detector is stronger in this semi-templated setting.

\begin{table}[h]
\centering
\small
\caption{Detector variants inside \textsc{TRACE-F} on RecMovie. \emph{Overall}: any-success. \emph{Final Output}: final-output env.success (matches Table~\ref{tab:main-comparison}).}
\label{tab:app-detector-recmovie}
\begin{tabular}{lrr}
\toprule
\textbf{Detector} & \textbf{Overall} & \textbf{Final Output} \\
\midrule
\textbf{Rule-based detector}
& \textbf{57\%} & \textbf{51\%} \\
LLM, zero-shot
& 21\% & 12\% \\
LLM, chain-of-thought
& 21\% & \phantom{1}8\% \\
\bottomrule
\end{tabular}
\end{table}

\paragraph{Direct-commit variants.}
The headline same-decoder comparison is reported in Table~\ref{tab:structure-necessity} of the main text. Table~\ref{tab:app-structure-recmovie} separates the three RecMovie direct-commit variants: \emph{AND} over all wanted predicates, \emph{OR} over at least one predicate per dimension, and a \emph{RANKED} top-$k$ match score.

\begin{table}[h]
\centering
\small
\caption{Same decoder, with versus without the \textsc{TRACE} test-before-commit scaffold, on RecMovie. Entries are final-output env.success over $100$ seeds; for direct-commit baselines, final-output success coincides with any-success since they return on first hit.}
\label{tab:app-structure-recmovie}
\begin{tabular}{lr}
\toprule
\textbf{Configuration} & \textbf{env.success} \\
\midrule
\textbf{\textsc{TRACE-I}}
& \textbf{90\%} \\
Direct-AND (strict)
& 15\% \\
Direct-OR (per-dim disjunction)
& 23\% \\
Direct-RANKED (top-$k$ match score)
& 22\% \\
\bottomrule
\end{tabular}
\end{table}

\paragraph{Feedback omission.}
We test an i.i.d.\ feedback-dropout model in which each feedback message is replaced by the empty string with probability $p$. Table~\ref{tab:app-noise-recmovie} reports RecMovie results.

\begin{table}[h]
\centering
\small
\caption{Interaction-budget any-success of \textsc{TRACE-I} on RecMovie under i.i.d.\ feedback omission, $100$ seeds per cell. Performance is flat through $20\%$ omission and degrades gradually afterwards.}
\label{tab:app-noise-recmovie}
\begin{tabular}{lr}
\toprule
\textbf{Omission rate} $p$ & \textbf{any-success} \\
\midrule
$0\%$  & 95\% \\
$10\%$ & 95\% \\
$20\%$ & 95\% \\
$30\%$ & 94\% \\
$50\%$ & 93\% \\
$70\%$ & 87\% \\
\bottomrule
\end{tabular}
\end{table}

\paragraph{False identities.}
Table~\ref{tab:app-false-identity} isolates false extracted identities from feedback omission. For every decoded feedback predicate, with probability $p_{\mathrm{false}}$ we replace it by a uniformly sampled wrong predicate from the same attribute dimension that is not in the seed's canonical $C^*$. The \textsc{TRACE-I} variant uses these corrupted identities only in the proposer while keeping the rule-based detector clean; direct baselines accumulate the same corrupted identities without test-before-commit. To capture degradation across the entire run rather than just the converged output, this sweep uses the interaction-budget any-success diagnostic; at $p_{\mathrm{false}}=0$ it therefore reports the $95\%$ any-success value rather than the $90\%$ final-output headline. As false identities increase, \textsc{TRACE-I} degrades gradually ($95\%\to67\%$ any-success from $p_{\mathrm{false}}=0$ to $0.5$), whereas all direct-commit variants fall to $8\%$. The direct methods can still have high $\mathrm{superset}(C^*)$ because they accumulate many constraints, including the true ones; the any-success column reveals the cost of unfiltered false positives, while \textsc{TRACE-I}'s test-before-commit filters them out before commitment.

\begin{table}[h]
\centering
\small
\setlength{\tabcolsep}{3pt}
\caption{False-identity-only RecMovie sweep over $100$ seeds. Each extracted predicate is independently replaced by a wrong same-dimension predicate with probability $p_{\mathrm{false}}$. Success is interaction-budget any-success, unlike the final-output rates in Table~\ref{tab:false-identity-main}.}
\label{tab:app-false-identity}
\begin{tabular}{llrr}
\toprule
\textbf{$p_{\mathrm{false}}$} & \textbf{Method} & \textbf{any-succ.} & \textbf{$\mathrm{superset}(C^*)$} \\
\midrule
$0.0$ & \textbf{\textsc{TRACE-I}} & $\mathbf{95\%}$ & $59\%$ \\
$0.0$ & Direct-AND                    & $15\%$ & $94\%$ \\
$0.0$ & Direct-OR                     & $23\%$ & $94\%$ \\
$0.0$ & Direct-RANKED                 & $22\%$ & $94\%$ \\
\midrule
$0.1$ & \textbf{\textsc{TRACE-I}} & $\mathbf{90\%}$ & $54\%$ \\
$0.1$ & Direct-AND                    & $12\%$ & $86\%$ \\
$0.1$ & Direct-OR                     & $12\%$ & $93\%$ \\
$0.1$ & Direct-RANKED                 & $19\%$ & $94\%$ \\
\midrule
$0.2$ & \textbf{\textsc{TRACE-I}} & $\mathbf{83\%}$ & $49\%$ \\
$0.2$ & Direct-AND                    & $\phantom{0}8\%$ & $78\%$ \\
$0.2$ & Direct-OR                     & $12\%$ & $92\%$ \\
$0.2$ & Direct-RANKED                 & $16\%$ & $93\%$ \\
\midrule
$0.3$ & \textbf{\textsc{TRACE-I}} & $\mathbf{85\%}$ & $53\%$ \\
$0.3$ & Direct-AND                    & $\phantom{0}8\%$ & $73\%$ \\
$0.3$ & Direct-OR                     & $\phantom{0}8\%$ & $90\%$ \\
$0.3$ & Direct-RANKED                 & $12\%$ & $92\%$ \\
\midrule
$0.5$ & \textbf{\textsc{TRACE-I}} & $\mathbf{67\%}$ & $38\%$ \\
$0.5$ & Direct-AND                    & $\phantom{0}7\%$ & $58\%$ \\
$0.5$ & Direct-OR                     & $\phantom{0}7\%$ & $90\%$ \\
$0.5$ & Direct-RANKED                 & $\phantom{0}8\%$ & $93\%$ \\
\bottomrule
\end{tabular}
\end{table}

\paragraph{Equal output budgets.}\label{app:equal-budget}
All runs use the same $100$ RecMovie seeds and report final-output success, i.e., whether the single output returned by a method satisfies the user. The common budget unit is a generated output that the environment evaluates. Under a cap $B$, \textsc{TRACE-I} spends at most $B-1$ outputs on feedback-bearing tests and returns one final output, sampled from its deepest committed node, within the cap; each seed and cap is run independently. Iterative baselines stop at their first accepted output and otherwise return their output at the cap. Best-of-$N$ draws $B$ independent generations and is granted an oracle selector that returns a satisfying sample whenever one exists. All baselines receive the environment's feedback mapped to the same canonical identity projection over the $36$ predicates used by the \textsc{TRACE-I} decoder. The baseline configurations of Table~\ref{tab:main-comparison} were run only with $T=20$. To evaluate baselines with $60$ outputs we use three variants of the same methods: DirectLLM-W8 and Self-Refine-W8 condition only on the eight most recent interactions, and Reflexion-NoCycle keeps four trials but disables the early termination of a trial that repeats the same recommendation. For these variants, success at $B=20$ is read from the same $60$-output runs as whether the first accepted output occurs within the first $20$ outputs. Tables~\ref{tab:app-equal-budget} and~\ref{tab:app-low-interaction} give the numbers plotted in Figure~\ref{fig:equal-budget-recmovie}, and Table~\ref{tab:app-equal-budget-cost} reports resource use.

\begin{table}[h]
\centering
\small
\caption{Final-output success on RecMovie under common output caps ($100$ seeds; Wilson $95\%$ intervals). The Table~\ref{tab:main-comparison} configurations were run only with $T=20$.}
\label{tab:app-equal-budget}
\begin{tabular}{lcc}
\toprule
\textbf{Method} & $B=20$ & $B=60$ \\
\midrule
\textsc{TRACE-I} ($m_{\mathrm{exp}}=5$) & $\mathbf{73\%}$ {\scriptsize [$64$, $81$]} & $\mathbf{86\%}$ {\scriptsize [$78$, $91$]} \\
DirectLLM (Table~\ref{tab:main-comparison} config.) & $40\%$ {\scriptsize [$31$, $50$]} & n/a \\
Reflexion (Table~\ref{tab:main-comparison} config.) & $42\%$ {\scriptsize [$33$, $52$]} & n/a \\
Self-Refine (Table~\ref{tab:main-comparison} config.) & $32\%$ {\scriptsize [$24$, $42$]} & n/a \\
DirectLLM-W8 & $38\%$ {\scriptsize [$29$, $48$]} & $48\%$ {\scriptsize [$38$, $58$]} \\
Reflexion-NoCycle & $35\%$ {\scriptsize [$26$, $45$]} & $37\%$ {\scriptsize [$28$, $47$]} \\
Self-Refine-W8 & $37\%$ {\scriptsize [$28$, $47$]} & $42\%$ {\scriptsize [$33$, $52$]} \\
Best-of-$N$ (oracle selector) & $17\%$ {\scriptsize [$11$, $26$]} & $21\%$ {\scriptsize [$14$, $30$]} \\
\bottomrule
\end{tabular}
\end{table}

\begin{table}[h]
\centering
\small
\caption{\textsc{TRACE-I} test-window sweep on RecMovie under common output caps (final-output success with Wilson $95\%$ intervals; feedback rounds per episode).}
\label{tab:app-low-interaction}
\begin{tabular}{rrcrr}
\toprule
$m_{\mathrm{exp}}$ & $B$ & \textbf{Success} & \textbf{Mean rounds} & \textbf{P90 rounds} \\
\midrule
$1$ & $20$ & $65\%$ {\scriptsize [$55$, $74$]} & $4.43$ & $5$ \\
$3$ & $20$ & $78\%$ {\scriptsize [$69$, $85$]} & $13.42$ & $19$ \\
$3$ & $60$ & $77\%$ {\scriptsize [$68$, $84$]} & $14.26$ & $22$ \\
$5$ & $20$ & $73\%$ {\scriptsize [$64$, $81$]} & $18.61$ & $19$ \\
$5$ & $60$ & $86\%$ {\scriptsize [$78$, $91$]} & $22.08$ & $27$ \\
\bottomrule
\end{tabular}
\end{table}

\begin{table}[h]
\centering
\small
\caption{Resource use per seed on RecMovie, averaged over each method's complete $100$-seed logs (baselines: $60$-output runs). \emph{Evals}: environment-evaluated outputs. \emph{Gen.} and \emph{Aux.}: LLM calls for candidate generation and for reflection or critique.}
\label{tab:app-equal-budget-cost}
\begin{tabular}{lrrrrr}
\toprule
\textbf{Method} & \textbf{Evals} & \textbf{Gen.\ LLM} & \textbf{Aux.\ LLM} & \textbf{LLM tokens} & \textbf{Wall (s)} \\
\midrule
\textsc{TRACE-I} ($m_{\mathrm{exp}}=5$) & $22.1$ & $0$ & $0$ & $0$ & $0.4$ \\
\textsc{TRACE-I} ($m_{\mathrm{exp}}=15$, default) & $60.6$ & $0$ & $0$ & $0$ & $0.9$ \\
DirectLLM-W8 & $36.3$ & $36.3$ & $0$ & $21{,}275$ & $14.4$ \\
Reflexion-NoCycle & $39.9$ & $39.9$ & $2.0$ & $25{,}845$ & $14.2$ \\
Self-Refine-W8 & $38.4$ & $38.4$ & $38.0$ & $62{,}133$ & $42.4$ \\
Best-of-$N$ ($N=60$) & $60.0$ & $60.0$ & $0$ & $13{,}479$ & $16.7$ \\
\bottomrule
\end{tabular}
\end{table}

\paragraph{Trusted initial constraints.}
Requirements that are already known, for example stated in the user's initial request or verified in an earlier session, can seed the root: \textsc{TRACE} then starts from $C_{v_0}=C_{\mathrm{init}}$, never re-tests these constraints, and searches only for the remaining ones; $C_{\mathrm{init}}=\emptyset$ recovers Algorithm~\ref{alg:robust}. If $C_{\mathrm{init}}\subseteq C^*$, the root is viable because $\mathcal{X}^*\subseteq\mathrm{Region}(C^*)\subseteq\mathrm{Region}(C_{\mathrm{init}})$, and the proofs of Theorems~\ref{thm:main-noisy} and~\ref{thm:identity} apply to the residual search, since they only require every committed node, including the root, to be viable. In Table~\ref{tab:app-trusted-root}, \textsc{TRACE-I} ($m_{\mathrm{exp}}=5$) receives $\min(2,|C^*|)$ oracle-provided true constraints at initialization (the four seeds with $|C^*|=1$ receive one). Because these constraints are supplied by the oracle, the gain is an upper bound on the value of verified context.

\begin{table}[h!]
\centering
\small
\caption{\textsc{TRACE-I} ($m_{\mathrm{exp}}=5$) on RecMovie with an oracle-trusted root (final-output success with Wilson $95\%$ intervals; mean feedback rounds per episode).}
\label{tab:app-trusted-root}
\begin{tabular}{lcccc}
\toprule
\textbf{Root} & $B=20$ & $B=60$ & \textbf{Rounds} ($B=20$) & \textbf{Rounds} ($B=60$) \\
\midrule
$C_{\mathrm{init}}=\emptyset$ & $73\%$ {\scriptsize [$64$, $81$]} & $86\%$ {\scriptsize [$78$, $91$]} & $18.61$ & $22.08$ \\
$|C_{\mathrm{init}}|\le 2$, oracle-trusted & $\mathbf{90\%}$ {\scriptsize [$83$, $94$]} & $\mathbf{91\%}$ {\scriptsize [$84$, $95$]} & $10.09$ & $10.77$ \\
\bottomrule
\end{tabular}
\end{table}

\subsection{Full \textsc{20Q}, \textsc{EDA}, \textsc{Mastermind}, and Poem results}
\label{app:twentyq-eda-full}

\paragraph{\textsc{20Q} and \textsc{EDA}: coverage versus env.success.}
Table~\ref{tab:app-20q-eda-superset} compares env.success and the $\mathrm{superset}(C^*)$ coverage diagnostic on \textsc{20Q} and \textsc{EDA}. The theorem directly controls coverage; env.success may also depend on catalog grounding and generator fallback behavior.

\begin{table}[h!]
\centering
\small
\caption{Coverage diagnostic versus env.success on \textsc{20Q} and \textsc{EDA}.
On sparse \textsc{EDA}, catalog grounding can decouple env.success from
coverage; the theorem-aligned diagnostic is $\mathrm{superset}(C^*)$.}
\label{tab:app-20q-eda-superset}
\begin{tabular}{lrrrr}
\toprule
& \multicolumn{2}{c}{\textsc{20Q}} & \multicolumn{2}{c}{\textsc{EDA Things}} \\
\cmidrule(lr){2-3} \cmidrule(lr){4-5}
\textbf{Configuration} & env.succ. & superset$(C^*)$ & env.succ. & superset$(C^*)$ \\
\midrule
\textsc{TRACE-F} (uniform)
& 84\% & 82\% & 87\% & 84\% \\
\textbf{\textsc{TRACE-I} (decoded)}
& \textbf{100\%} & \textbf{100\%} & \textbf{99\%} & \textbf{99\%} \\
\bottomrule
\end{tabular}
\end{table}

\paragraph{\textsc{Mastermind}: fixed-budget $K$-sweep.}\label{app:mastermind-full}
Table~\ref{tab:app-mastermind-k} varies $K$ at fixed $n_{\mathrm{colors}}=6$. Since $|\mathcal{H}|=6K$, this sweep varies $K$ and $H$ jointly. It is therefore not a direct test of the theorem's logarithmic $K$ factor; rather, it is a fixed-budget stress test showing how the uniform proposer degrades as the candidate class grows. The complementary $H$-sweep at fixed $K=3$ is reported in Table~\ref{tab:h-scaling} of the main text; on that sweep, $\mathrm{superset}(C^*)$ tracks env.success exactly except at the largest setting $n_{\mathrm{colors}}=20$, where \textsc{TRACE-F} reaches env.success $74\%$ but $\mathrm{superset}(C^*)=72\%$ (\textsc{TRACE-I} has both at $99\%$).

\begin{table}[h]
\centering
\small
\caption{Fixed-budget $K$-sweep on \textsc{Mastermind} with $n_{\mathrm{colors}}=6$ fixed, so $|\mathcal{H}|=6K$.}
\label{tab:app-mastermind-k}
\begin{tabular}{rrrrr}
\toprule
\textbf{K} & \textbf{\textsc{TRACE-F} succ.} & \textbf{Binary attempts} & \textbf{\textsc{TRACE-I} succ.} & \textbf{Identity attempts} \\
\midrule
 2 & $\mathbf{73\%}$  & $10.09$ & $99\%$  &  $5.14$ \\
 4 & $35\%$  & $13.84$ & $\mathbf{100\%}$ &  $5.56$ \\
 6 & $\phantom{1}4\%$  & $14.95$ & $\mathbf{100\%}$ &  $7.29$ \\
 8 & $\phantom{1}1\%$  & $15.00$ & $\mathbf{100\%}$ &  $9.11$ \\
10 & $\phantom{1}0\%$  & $15.00$ & $\mathbf{100\%}$ & $10.98$ \\
\bottomrule
\end{tabular}
\end{table}

\paragraph{Poem: open-vocabulary action generation.}\label{app:poem-full}
For LLF-Bench Poem, the action is free-form English text rather than a finite catalog item. We implement (i) a word-bucket generator that composes each line with a proposed syllable count by sampling CMU-dict words from syllable-indexed buckets; (ii) a rule-based detector that regex-parses native feedback of the form ``\emph{The line: X has $n$ syllables. It should only have $m$ syllables.}''; and (iii) \textsc{TRACE-F} and \textsc{TRACE-I} variants over line-syllable predicates.

Table~\ref{tab:app-poem} reports Haiku and Tanka results. \textsc{TRACE-I} covers $C^*$ in every seed on both forms. \textsc{TRACE-F} degrades from Haiku to Tanka as $K$ and $|\mathcal{H}|=6K$ increase under the fixed proposal budget.

\begin{table}[h]
\centering
\small
\caption{\textsc{TRACE} on LLF-Bench Poem with Haiku and Tanka forms. We use $100$ seeds per cell and $m_{\mathrm{exp}}=N_{\max}=15$. Brackets are Wilson $95\%$ confidence intervals. \textsc{TRACE-I} exploits per-line target syllable counts revealed in the feedback; \textsc{TRACE-F} must search over $|\mathcal{H}|=6K$ predicates.}
\label{tab:app-poem}
\begin{tabular}{lrrrr}
\toprule
& \multicolumn{2}{c}{\textbf{Haiku ($K=3$)}} & \multicolumn{2}{c}{\textbf{Tanka ($K=5$)}} \\
\cmidrule(lr){2-3} \cmidrule(lr){4-5}
\textbf{Method} & env.succ. & attempts & env.succ. & attempts \\
\midrule
\textsc{TRACE-F}
& $54\%$ \hfill {\scriptsize [$44$,$63$]} & $12.57$
& $14\%$ \hfill {\scriptsize [$8$,$22$]} & $14.77$ \\
\textsc{TRACE-I}
& $\mathbf{100\%}$ \hfill {\scriptsize [$96$,$100$]} & $3.80$
& $\mathbf{100\%}$ \hfill {\scriptsize [$96$,$100$]} & $5.81$ \\
\bottomrule
\end{tabular}
\end{table}

The Poem experiment shows that \textsc{TRACE} is not limited to catalog recommendation. With a suitable constrained generator, the same commit/reject scaffold applies to open-vocabulary text generation. The experiment is still structured: the feedback contains parseable line-level syllable information. Richer unstructured natural-language feedback, where rule-based extraction is unlikely to suffice, remains an important direction for future work.

\newpage

\subsection{Example execution trace on RecMovie}
\label{app:trace-recmovie}

To make the algorithm's per-round behavior concrete, the listing below shows the full execution trace of \textsc{TRACE-I} on RecMovie seed $16$ with the headline default $m_{\mathrm{exp}}=N_{\max}=15$ (rule-based detector + decoded proposer). The latent constraint set is $C^*=\{\text{type\_=TV show},\ \text{year=80s},\ \text{genre=Sport}\}$. The trace illustrates three mechanisms central to the framework:
\begin{itemize}[leftmargin=*]
\item \textbf{Test-before-commit catches a spurious child cheaply.} Attempt~$n{=}1$ proposes the spurious constraint \texttt{type\_=movie}; the very first feedback contradicts it (``not all TV shows'') and also reveals two missing constraints (``not exclusively Sport TV shows''; ``not from the 80s''), so $\mathrm{Viol}$ fires after one round. The child is rejected after consuming only $1$ round of the per-test cap $m_{\mathrm{exp}}=15$.
\item \textbf{Multi-sample falsification on viable children is silent.} Attempts $n=2$, $3$, $4$ each propose a true latent constraint; feedback is consistent with the tested set on all $15$ rounds (we elide the $14$ duplicate rounds), so each child survives the test window and is committed.
\item \textbf{Identification access guides the proposer.} The decoded proposer reads the feedback for extracted identities; by attempt~$n{=}2$ the working set $W$ already contains the $\text{type\_=TV show}$ identity extracted from the attempt~$1$ feedback, leading to a true child being proposed and committed. Subsequent commits ($\text{year=80s}$, $\text{genre=Sport}$) follow the same pattern.
\end{itemize}
After $4$ attempts and $46$ interaction rounds (well under the budget $T = N_{\max}\cdot m_{\mathrm{exp}} = 225$), the deepest committed node $v^{\dagger}$ satisfies $C_{v^{\dagger}} = C^*$ and the output sample $\hat{x}$ consists of three Sport TV shows from the 80s, satisfying every latent constraint.

\begin{tcolorbox}[breakable, colback=gray!5, colframe=gray!50, fontupper=\footnotesize, left=1mm, right=1mm,
  title={\textbf{Listing}: \textsc{TRACE-I} execution trace on RecMovie seed $16$},
  fonttitle=\bfseries\small,
  ]
\begin{verbatim}
======================================================================
  TRACE-I run on RecMovie  |  seed=16  m_exp=15  N_max=15
======================================================================
  C* (latent constraints) : {genre=Sport, type_=TV show, year=80s}
  |H| (candidate class)   : 36

----------------------------------------------------------------------
Attempt n=1   (parent C_v = {},  depth=0)
  Proposed:  type_=movie     [spurious]
  Child C_u = {type_=movie}
    s=1  feedback: "The recommendations are not all TV shows. The
         suggested titles aren't exclusively Sport TV shows. The
         recommendations are not from the 80s."   [Viol fired]
  -> REJECT   (rounds used: 1/15)

----------------------------------------------------------------------
Attempt n=2   (parent C_v = {},  depth=0)
  Proposed:  type_=TV show   [true latent]
  Child C_u = {type_=TV show}
    s=1  feedback: "You've recommended TV shows, which is awesome!
         Among the recommended titles, you'll find more than Sport
         TV shows. The recommendations are not from the 80s."
    ... (s=2..15: feedback unchanged, child survives)
  -> COMMIT   (rounds used: 15/15)
      C_v advances to depth 1: {type_=TV show}

----------------------------------------------------------------------
Attempt n=3   (parent C_v = {type_=TV show},  depth=1)
  Proposed:  year=80s        [true latent]
  Child C_u = {type_=TV show, year=80s}
    s=1  feedback: "You've recommended TV shows, which is awesome!
         Among the recommended titles, you'll find more than Sport
         TV shows. The recommendations are all from the 80s, great!"
    ... (s=2..15: feedback unchanged, child survives)
  -> COMMIT   (rounds used: 15/15)
      C_v advances to depth 2: {type_=TV show, year=80s}

----------------------------------------------------------------------
Attempt n=4   (parent C_v = {type_=TV show, year=80s},  depth=2)
  Proposed:  genre=Sport     [true latent]
  Child C_u = {genre=Sport, type_=TV show, year=80s}
    s=1  feedback: "You've recommended TV shows, which is awesome!
         The recommendations are all Sport, nice! The recommendations
         are all from the 80s, great!"
    ... (s=2..15: feedback unchanged, child survives)
  -> COMMIT   (rounds used: 15/15)
      C_v advances to depth 3: {genre=Sport, type_=TV show, year=80s}

----------------------------------------------------------------------
Final v_dagger: C = {genre=Sport, type_=TV show, year=80s}
Output x_hat:   "Sport i 2'eren", "ZDF Sport Extra", "A Question of 
                Sport"
Termination:    empty_region_cap (no more children to test)
Total rounds:   46 / 225        Total attempts:  4 / 15
Coverage (C* subset C_{v_dagger}):  True
======================================================================
\end{verbatim}
\end{tcolorbox}

\subsection{Prompts for LLM-based components}
\label{app:prompts}

For reproducibility we record the verbatim prompts used by the LLM-based ablations in Appendix~\ref{app:recmovie-full} and by the \textsc{TravelPlanner} runs in Section~\ref{sec:travelplanner}. The prompts were written for Qwen3-4B served locally; the DeepSeek-V4-Flash configuration of Section~\ref{sec:travelplanner} sends the same \textsc{TravelPlanner} prompts through the DeepSeek API with thinking mode enabled; the trailing ``/no\_think'' marker disables Qwen3's chain-of-thought block when not needed.

\begin{tcolorbox}[breakable, colback=gray!5, colframe=gray!50, fontupper=\footnotesize, left=1mm, right=1mm,
  title={\textbf{LMViolationDetector: zero-shot} (Table~\ref{tab:app-detector-recmovie}, $21\%$)},
  fonttitle=\bfseries\small]
\begin{verbatim}
[system] You are a careful logical judge. Answer strictly as
         instructed. /no_think

[user] A movie recommender has committed to the following
       assumptions about the user's preferences:
         - <predicate 1>
         - <predicate 2>
         ...

       The user responded with this feedback after seeing the
       latest recommendations:
       """<feedback>"""

       QUESTION: Does the feedback CONTRADICT any of the
       committed assumptions?
       - Answer YES if the feedback indicates that at least one
         committed assumption is wrong (e.g. the feedback says
         the recommendations were not of a type/era/genre the
         assumption claimed).
       - Answer NO if the feedback is consistent with the
         assumptions, or only complains about a dimension not
         in the assumptions.

       Answer with a single word: YES or NO. /no_think
\end{verbatim}
\end{tcolorbox}

\begin{tcolorbox}[breakable, colback=gray!5, colframe=gray!50, fontupper=\footnotesize, left=1mm, right=1mm,
  title={\textbf{LMViolationDetector: chain-of-thought} (Table~\ref{tab:app-detector-recmovie}, $21\%$)},
  fonttitle=\bfseries\small]
\begin{verbatim}
[system + user same as zero-shot, replacing the final line:]

       First, in ONE short sentence, explain your reasoning.
       Then on a new line, write ANSWER: YES or ANSWER: NO.
       /no_think
\end{verbatim}
\end{tcolorbox}

\begin{tcolorbox}[breakable, colback=gray!5, colframe=gray!50, fontupper=\footnotesize, left=1mm, right=1mm,
  title={\textbf{IncrementalLMProposer: per-feedback extraction} (Table~\ref{tab:app-proposer-recmovie}, $74\%$)},
  fonttitle=\bfseries\small]
\begin{verbatim}
[system] You extract structured preferences from feedback.
         Follow the output format precisely. /no_think

[user] You are analysing ONE feedback message from a user to a
       movie recommender.

       CRITICAL CONVENTION (RecMovie): feedback of the form
         "The recommendations are not X" / "not all X" /
         "not from X"
       is a CORRECTION meaning the user WANTS X on that
       dimension. Positive phrasing ("... wonderful!" /
       "... great!") also confirms the user wants that value.
       Phrase "past few years" refers to the year bucket
       "recent".

       Dimensions you can emit (use EXACTLY these names):
         - type_  (values: movie, TV show)
         - year   (values: 50s, 60s, 70s, 80s, 90s, 2000s,
                   recent)
         - genre  (values: Action, Comedy, Drama, Thriller,
                   Horror, Sci-Fi, ...)

       Feedback:
       """<feedback>"""

       List the user's preferences revealed by this feedback.
       Output ONE line per preference in the format
       ``dim=value``. If no preference is revealed, output
       nothing. No other text, no prose. /no_think
\end{verbatim}
\end{tcolorbox}

\begin{tcolorbox}[breakable, colback=gray!5, colframe=gray!50, fontupper=\footnotesize, left=1mm, right=1mm,
  title={\textbf{TravelPlanner plan generator: system prompt} (\texttt{tp\_v6\_no\_cost\_prompt}, used in Section~\ref{sec:travelplanner})},
  fonttitle=\bfseries\small]
\begin{verbatim}
You are a meticulous travel-planner that produces machine-
readable JSON plans.

You will receive:
  - A travel query (origin, destination, days, budget).
  - Optional committed constraints the user requires.
  - Reference information listing the only valid entities
    (flights, restaurants, attractions, accommodations) you
    may use.

Output rules (every plan MUST satisfy ALL of these):

 1. JSON-only output: a single object {"plan": [...]} with
    one entry per day. No prose, no markdown.

 2. Entity-format rule: for breakfast / lunch / dinner /
    attraction / accommodation fields, write "Name, City"
    (Name copied verbatim from reference). Multiple
    attractions per day separated by semicolons.

 3. Travel-day current_city format: "from CityA to CityB" on
    travel days; the city name only on stationary days.

 4. Transportation paradigm: pick ONE paradigm for the trip
    (Flight or Self-driving) and stick to it. Stationary days
    MUST use transportation = "-".

 5. Closed-circle trip: day 1 starts at the origin city; the
    final day returns to the origin.

 6. Sandbox-only entities: every Name MUST appear verbatim in
    the reference; on any "from/to" pattern only use city
    names from the reference.

 7. Diversity & accommodation: do not repeat restaurants or
    attractions; use the same accommodation across consecutive
    nights; respect the reference's minimum-nights column.

 8. Completeness: fill every field; use "-" only when the
    field genuinely does not apply.

 9. Day-1 accommodation: never write "-".

10. Budget is a meaningful target, not an upper bound: prefer
    balanced choices when budget allows; never exceed the
    budget; if spending under 50% of budget, reconsider.
\end{verbatim}
\end{tcolorbox}

\noindent The user prompt is rendered by \texttt{build\_user\_prompt} with fields \texttt{Query / Origin / Destination / Days / People / Budget}, the canonicalized \texttt{Committed} block (predicates already accepted by \textsc{TRACE}), the \texttt{Reference Information} (optionally pre-filtered by the committed constraints), and the closing instruction ``\texttt{Respond with only the JSON object: \{"plan": [...]\}}''. Plans are evaluated by the env's built-in commonsense and hard-constraint evaluators; we do not implement an additional commonsense prompt of our own.

\end{document}